\def\eqref#1{equation~\ref{#1}}
\def\1{\bm{1}}
\DeclareMathAlphabet{\mathsfit}{\encodingdefault}{\sfdefault}{m}{sl}
\SetMathAlphabet{\mathsfit}{bold}{\encodingdefault}{\sfdefault}{bx}{n}
\DeclareMathOperator*{\argmin}{arg\,min}
\setlist[enumerate]{leftmargin=5mm,itemsep=0mm}
\theoremstyle{plain}
\newtheorem{theorem}{Theorem}[section]
\newtheorem{proposition}[theorem]{Proposition}
\newtheorem{corollary}[theorem]{Corollary}
\theoremstyle{definition}
\newtheorem{definition}[theorem]{Definition}
\newtheorem{assumption}[theorem]{Assumption}
\theoremstyle{remark}
\title{IBCL: Zero-shot Model Generation \\under Stability-Plasticity Trade-offs}
\author{%
  Pengyuan Lu$^{1,*}$, Michele Caprio$^{2,*}$, Eric Eaton$^{1}$, and Insup Lee$^{1}$
}
\begin{document}

\maketitle


\newcommand{\EL}[1]{{\color{red}#1}}
\newcommand{\MC}[1]{{\color{blue}#1}}
\newcommand{\EE}[1]{{\color{purple}#1}}
\newcommand{\IL}[1]{{\color{brown} [IL: #1]}}
\newcommand{\edit}[1]{{\color{black}#1}}
\newcommand{\edittmlr}[1]{{\color{black}#1}}
\newcommand{\edittmlrnew}[1]{{\color{black}#1}}

\begin{abstract}
    Algorithms that balance the stability-plasticity trade-off are well studied in the Continual Learning literature. However, only a few focus on obtaining models for specified trade-off preferences. When solving the problem of continual learning under specific trade-offs (CLuST), state-of-the-art techniques leverage rehearsal-based learning, which requires retraining when a model corresponding to a new trade-off preference is requested. This is inefficient, since there potentially exists a significant number of different trade-offs, and a large number of models may be requested. As a response, we propose Imprecise Bayesian Continual Learning (IBCL), an algorithm that tackles CLuST efficiently. IBCL replaces retraining with a constant-time convex combination. Given a new task, IBCL (1) updates the knowledge base as a convex hull of model parameter distributions, and (2) generates one Pareto-optimal model per given trade-off via convex combination without additional training. That is, obtaining models corresponding to specified trade-offs via IBCL is zero-shot. Experiments whose baselines are current CLuST algorithms show that IBCL improves \edittmlr{classification} by at most \edittmlr{44\%} on average per task accuracy, and by \edittmlr{45\%} on peak per task accuracy while maintaining a near-zero to positive backward transfer\edittmlr{, with memory overheads converging to constants}. In addition, its training overhead, measured by the number of batch updates, remains constant at every task, regardless of the number of preferences requested. \edittmlr{IBCL also improves multi-objective reinforcement learning tasks by maintaining the same Pareto front hypervolume, while significantly reducing the training cost.}  Details can be found at: \url{https://github.com/ibcl-anon/ibcl}.
\end{abstract}
\section{Introduction}
\label{sec:introduction}

\begingroup
\renewcommand{\thefootnote}{}
\footnotetext{\textsuperscript{1}Department of Computer and Information Science, University of Pennsylvania, U.S.A.}
\footnotetext{\textsuperscript{2}Department of Computer Science, University of Manchester, U.K.}
\footnotetext{\textsuperscript{*}These authors contribute equally to this paper.}
\addtocounter{footnote}{-1}
\endgroup

Continual Learning (CL), also known as lifelong machine learning, is a special case of multi-task learning, where tasks arrive in temporal sequence one-by-one \citep{thrun1998lifelong,eaton2013lifelong,chen2016lifelong,parisi2019continual}. Two key properties matter for CL algorithms: stability and plasticity \citep{de2021continual}. Here, stability means the ability to maintain performance on previous tasks, not forgetting what the model has learned, and plasticity refers to the ability to adapt to a new task. Unfortunately, these two properties are conflicting due to the multi-objective optimization nature of CL \citep{kendall2018multi,sener2018multi}. For years, researchers have been balancing the stability-plasticity trade-off. However, few have discussed the problem of learning models for specifically given trade-off points. In this paper, we focus on such a  problem, which we denote as CL under specific trade-offs (CLuST).

Why is CLuST important? First, in certain scenarios, it is important to explicitly specify \emph{how much} stability and plasticity are needed to obtain a customized model for each trade-off preference. Second, when there exists a large number of preferences, the training \emph{efficiency} of every customized model matters. Otherwise, the training cost accumulates on all preferences and becomes prohibitive. Therefore, we are not only looking for a solution to the CLuST problem, but also an efficient one.

\textbf{Motivating Example.} Consider an example of a movie recommendation system. The model is first trained to rate movies in the sci-fi genre. Then, the movie company adds a new genre, e.g., documentaries. The model needs to learn how to rate documentaries while not forgetting how to rate sci-fis. Training this model boils down to a CL problem. The company now wants to build a recommendation system that adapts to users' tastes in movies. For example, Alice has equal preferences over sci-fis and documentaries. Bob, however, wants to watch only documentaries and has no interest in sci-fis at all. Consequently, the company aims to train two customized models for Alice and Bob, respectively, to predict how likely a sci-fi or a documentary are to be recommended. Based on individual preferences, Alice's personal model should balance between the accuracy in rating sci-fis and rating documentaries, while Bob's model allows for compromising on the accuracy in rating sci-fis to achieve a high accuracy in rating documentaries. As new genres are added, users should be able to input their preferences on all available genres to obtain customized models. Since there could be many different users, and each user's taste in movie genres could vary over time, the movie company should implement models that adapt to \textit{a significant 
number of preferences}. The costs would be prohibitive if the company had to train one model per distinct preference.

\begin{figure}
    \centering
    \includegraphics[width=0.8\textwidth]{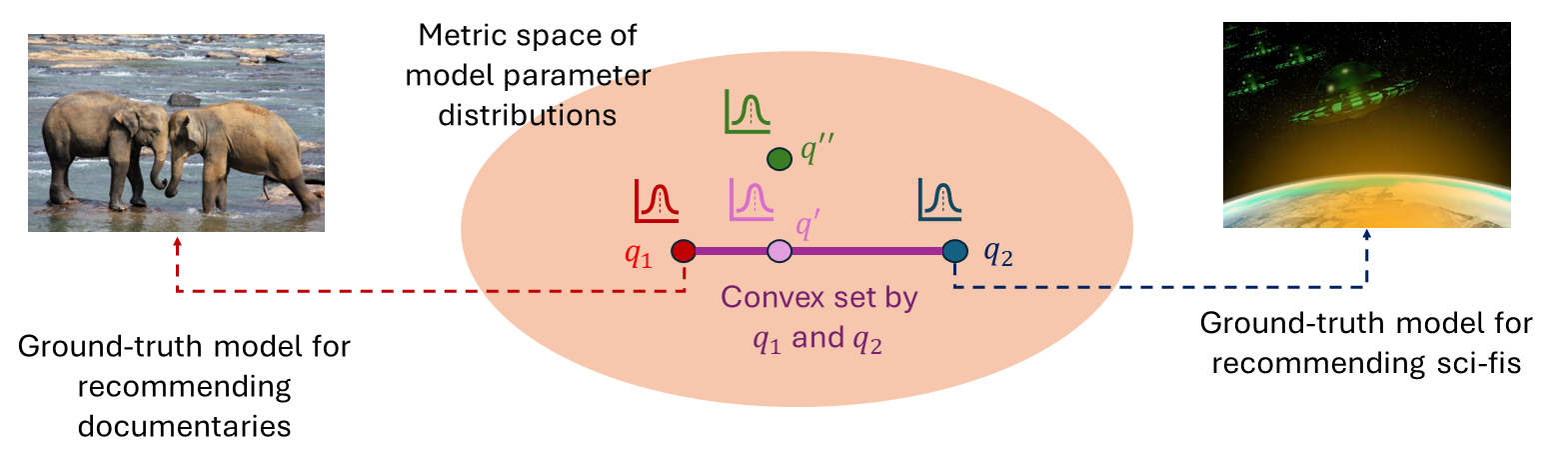}
    \caption{\edit{A Bayesian view of a Pareto-optimal parameter distribution $q'$ and a non-Pareto-optimal parameter distribution $q''$.}}
    \label{fig:bayesian_view}
\end{figure}

\textbf{The CLuST Problem and its Challenges.} To formalize the CLuST problem, we take a Bayesian perspective, where learnable model parameters are viewed as random variables \citep{farquhar2019unifying,kessler,variational}. As illustrated in Figure \ref{fig:bayesian_view}, we consider all parameter distributions living in a metric space. This metric can be any valid metric for distributions, such as the $2$-Wasserstein distance \citep{encyc_dist}. The figure shows an example of two sequential tasks, with ground-truth parameter distributions $q_1$ and $q_2$, respectively. From this setup, a distribution that emphasizes stability (in task 2) is a distribution closer to $q_1$ than $q_2$, and a distribution that prioritizes plasticity is closer to $q_2$ than $q_1$. Notice that irrespective of the desired stability-plasticity trade-off, we want the distribution to be \textit{Pareto-optimal}, which loosely means that there is no way to improve such distribution by making it closer to \textit{both} $q_1$ and $q_2$. We can see that Pareto-optimality is equivalent to being inside the convex set enclosed by $q_1$ and $q_2$. For example, $q'$ in the figure is a Pareto-optimal distribution, while $q''$ is not. With this setting, we can specify a trade-off point using a \textit{preference vector} \citep{mahapatra2020multi,mahapatra2021exact} $\bar{w} = (w_1, w_2)$, where $w_1, w_2 \geq 0$ and $w_1 + w_2 = 1$. The preferred Pareto-optimal distribution is, therefore, a convex combination $w_1q_1 + w_2q_2$.

So far, researchers have already proposed the use of preference vectors to specify trade-off points in multi-task and continual learning \citep{gupta2021scalable,lin2019pareto,lin2020controllable,ma2020efficient}. However, instead of using them as coefficients of distributions, state-of-the-art techniques use them as coefficients for loss functions in \textit{rehearsal-based methods}. That is, existing algorithms memorize some data $d_i$ for each task $i$ (for ``rehearsal''), and let the loss at task $i$ be $l_i = \sum_{j=1}^i w_jl(d_j)$, with $l$ being a generic loss function like cross-entropy. There are at least two drawbacks to this approach. First, rehearsals must retrain the entire model whenever we have a new trade-off preference. \textbf{In plain words, these methods have a training overhead proportional to the number of preferences at each task.} As numerous possible preferences exist, this boils down to an efficiency issue when there is a large number of preferences, such as many users in the movie recommendation example. It would be desirable if we could obtain the preferred models using training-free constant-time operations instead of retraining. Moreover, rehearsals must cache data, and stable performance on previous tasks depends on which data can be memorized. 

\begin{figure*}
    \centering
    \includegraphics[width=\textwidth]{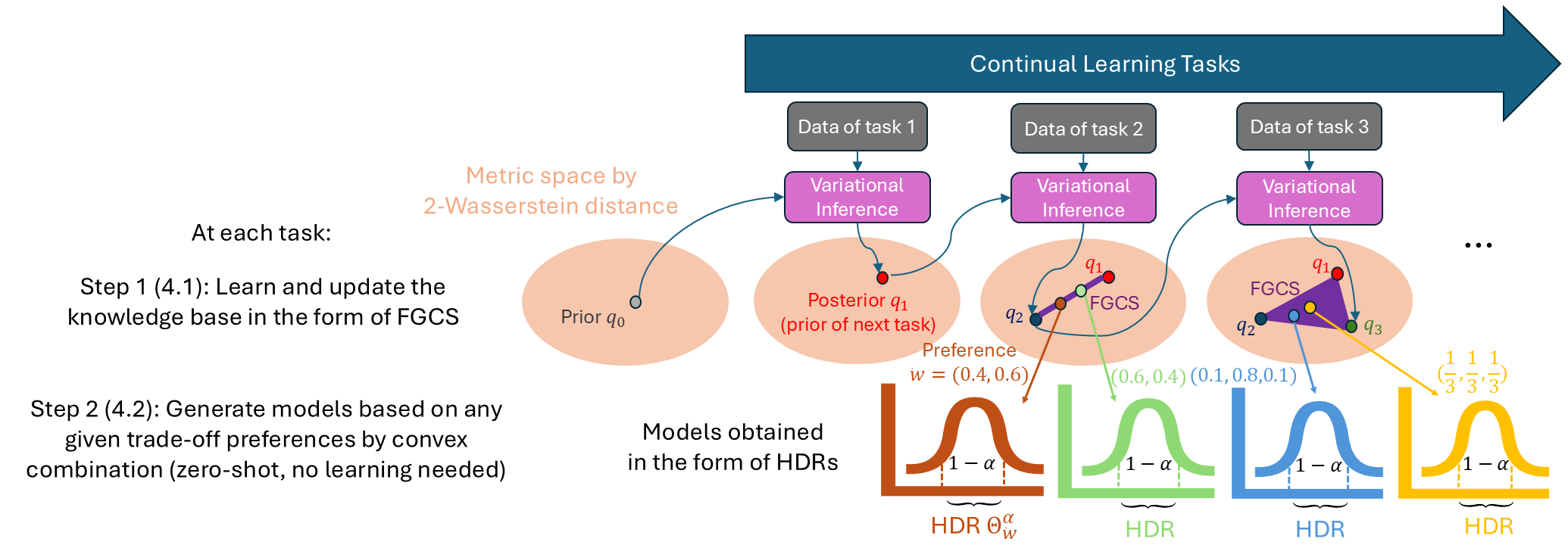}
    \caption{The workflow of Imprecise Bayesian Continual Learning (IBCL). Here, we start from 1 prior, but in practice, there may be more than 1 to reduce epistemic uncertainty \citep{eyke}.}
    \label{fig:ibcl_workflow}
\end{figure*}

\textbf{The IBCL Algorithm.} To overcome these shortcomings faced by CLuST algorithms, we propose Imprecise Bayesian Continual Learning (IBCL), whose workflow is illustrated in Figure \ref{fig:ibcl_workflow}.
In step 1, upon arrival of the training data for a new task, IBCL updates its \textit{knowledge base} (that is, all information shared across tasks) in the form of a convex set of distributions with finitely many extreme elements (the elements that cannot be written as convex combinations of one another), called \textit{finitely generated credal set} (FGCS) \citep{ibnn}. This is done by variational inference from the learned distribution of the previous task, and the learned distributions serve as extreme elements of the FGCS. 
Each point in the FGCS corresponds to one Pareto-optimal distribution on the trade-off polytope of all tasks so far. Then, at step 2, given any preference vector $\bar{w}$, IBCL selects the preferred distribution by convex combination. A parameter region is obtained as a highest density region (HDR) of the selected distribution, which is the smallest parameter set that contains the ground-truth model with high probability.

IBCL addresses the identified shortcomings as follows.  \textbf{First, IBCL replaces retraining in state-of-the-art with constant-time, zero-shot convex combination to generate models. It has a constant training overhead per task (to update the FGCS), independent of the number of preferences.} Additionally, no data cache is required, and therefore the stability of our model does not depend on the data memorized.  Experiments on image classification and NLP benchmarks support the effectiveness of IBCL. We find that IBCL improves on the baselines by at most \edittmlr{44\%} in average per task accuracy, and by \edittmlr{45\%} in peak per task accuracy, while maintaining a near-zero to positive backward transfer, with a constant training overhead regardless of the number of preferences. \edittmlr{Most importantly, IBCL has significantly smaller training time, costing only 6.3\% to 9.6\% of rehearsal-based baselines, measured in number of batch updates.} We also show that IBCL has a sublinear memory growth along the number of tasks.

\textbf{Contributions.} In general, we have the following contributions: 
\begin{enumerate}
    \item We are the first to formulate the problem of Continual Learning under Specific Trade-offs (CLuST) from a Bayesian perspective. This problem requests one model per trade-off preference, and therefore demands efficiency due to a potentially large number of preferences (Section \ref{sec:problem}).
    \item We propose Imprecise Bayesian Continual Learning (IBCL), a Bayesian CL algorithm that solves CLuST. IBCL leverages data structures from Imprecise Probability, and therefore is able to generate models to address arbitrary number of preferences at each task with a fixed training cost (Section \ref{sec:ibcl}).
    \item We experiment IBCL on standard image classification and NLP CL benchmarks, with at most \edittmlr{44\%} improvement in average per task accuracy, \edittmlr{45\%} in peak per task accuracy, almost zero catastrophic forgetting, \edittmlr{memory overhead converging to constants}, and most importantly, \edittmlr{the training time is significantly decreased to only 6.3\% to 9.6\% of baselines, measured in number of batch updates required} (Section \ref{sec:experiments}).
    \item \edittmlr{We also made an attempt to adapt IBCL to reinforcement learning tasks, resulting in the same level of Pareto front hypervolumes, while significantly reducing the training cost.}
\end{enumerate}

\section{Background}
\label{sec:preliminaries}

\edit{
\subsection{Imprecise Probability}
\label{subsec:ip}
}

Our algorithm hinges upon the concepts of finitely generated credal set (FGCS) from Imprecise Probability (IP) theory \citep{walley,augustin2014introduction,TroffaesDeCooman2014}.\footnote{For more modern references, see e.g. \citet{ergo_dipk,constr,caprio2025conformalized,inn,caprio2025optimaltransportepsiloncontaminatedcredal,chau2025integralimpreciseprobabilitymetrics,sloman2025epistemicerrorsimperfectmultitask}.} 

\begin{definition}[Finitely Generated Credal Set]
\label{def:fgcs}
Given a finite set of probability distributions $\{q^j\}^m_{j=1}$, a finitely generated credal set (FGCS) is the convex set
\begin{equation}
    \label{eq:fgcs}
    \mathcal{Q} = \left\lbrace{q: q = \sum_{j=1}^m \beta^j q^j \text{, } \beta^j \geq 0 \text{ } \forall j\ \text{, } \sum_{j=1}^m \beta^j = 1}\right\rbrace.
\end{equation}
\end{definition}

In other words, FGCS $\mathcal{Q}$ is the convex hull $\text{CH}(\{q^j\}^m_{j=1})$ of finitely many distributions $\{q^j\}^m_{j=1}$. That is, given a finite collection of distributions $\{q^j\}_{j=1}^m$ (that we call the extreme elements of the credal sets, and denote by $\text{ex}[\mathcal{Q}]$), $\mathcal{Q}$ is the collection of all probability distributions $q$ that can be written as a convex combination of the $q^j$’s. If the state space is finite, then the $q^j$’s can be seen as probability vectors, whose entries represent the probability mass assigned by distribution $q^j$ to the elements of the state space. Working with sets of probabilities allows to mitigate problems ensuing from distribution misspecification and/or drift \citep{ramneet,dc4l}.

Next, we borrow the idea of highest density region (HDR) \citep{coolen}.

\begin{definition}[Highest Density Region]
    \label{def:hdr}
    Let $\theta \in \Theta$ be a continuous random variable following a probability density function (pdf) $q$, with $\Theta$ being a set of interest.\footnote{Here, for ease of notation, we do not distinguish between a random variable and its realization.} Given a significance level $\alpha \in [0, 1]$, the $(1-\alpha)$-HDR is a subset $\Theta^{\alpha}_q \subset \Theta$, such that
    \begin{equation}
        \label{eq:hdr}
        \int_{\Theta^{\alpha}_q} q(\theta)d\theta \geq 1 - \alpha,\, \text{ and } \int_{\Theta^{\alpha}_q} d\theta \text{ is minimal.}
    \end{equation}
\end{definition}

In \eqref{eq:hdr}, requiring that $\int_{\Theta^{\alpha}_q} d\theta$ is minimal corresponds to requiring that $\Theta^{\alpha}_q$ has the smallest possible cardinality (i.e., the least possible number of elements). Indeed, if $\Theta$ is finite, it can be replaced by ``$|\Theta^{\alpha}_q|$ is minimal''. Since we consider the most general case (in which set $\Theta$ may be uncountable), we must use the integral notion instead of cardinality, as pointed out in previous research \citep{coolen}. In turn, \eqref{eq:hdr} tells us that $\Theta^{\alpha}_q$ is the set having the smallest number of elements that also satisfies $\Pr_{\theta \sim q}[\theta \in \Theta_q^{\alpha}] \geq 1 - \alpha$, provided that $\theta \sim q$.

\edit{To further explain HDR, an equivalent definition is as follows \citep{hyndman}.

\begin{definition}[Highest Density Region, Alternative]
\label{def:hdr_formal}
Let $\Theta$ be a set of interest, and consider a significance level $\alpha \in [0, 1]$. Suppose that a (continuous) random variable $\theta\in\Theta$ has probability density function (pdf) $q$.

The \textit{$\alpha$-level Highest Density Region (HDR)} $\Theta^\alpha_q$ is the subset of $\Theta$ such that
    \begin{equation}
        \label{eq:hdr_formal}
        \Theta^\alpha_q = \{\theta\in \Theta : q(\theta) \geq {q}^\alpha\},
    \end{equation}
    where 
    ${q}^\alpha$ is a constant value. In particular, $q^\alpha$ is the largest constant such that $\Pr_{\theta \sim q} [ \theta \in \Theta^\alpha_q ] \geq 1-\alpha$.
\end{definition}

Some scholars indicate HDRs as the Bayesian counterpart to the frequentist concept of confidence intervals. In dimension $1$, $\Theta^\alpha_q$ can be interpreted as the narrowest interval -- or union of intervals -- in which the value of the (true) parameter falls with probability of at least $1-\alpha$ according to distribution $q$. We give a simple visual example in Figure \ref{fig:hdr_ex}.
\begin{figure}[ht]
    \centering
    \includegraphics[width =.48\textwidth]{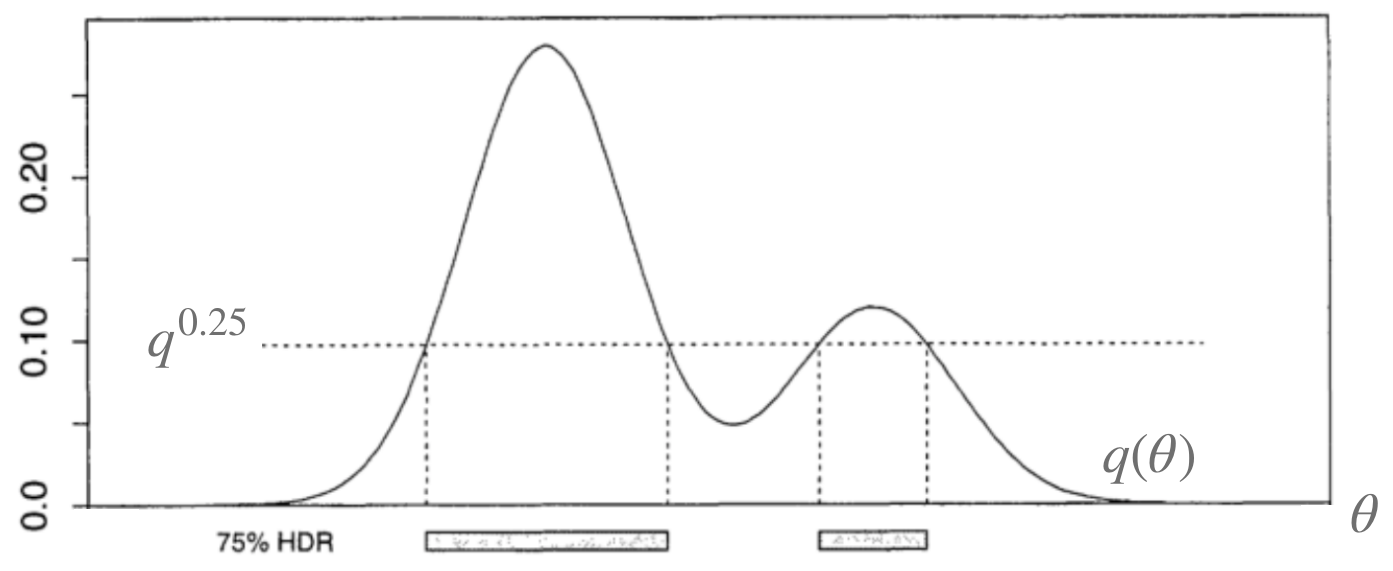}
    \caption{The $0.25$-HDR for a Normal Mixture density. This is a replica of \citet[Figure 1]{hyndman}.} 
    \label{fig:hdr_ex}
\end{figure} 
}

\subsection{Continual Learning}
\label{subsec:cl}

Continual Learning, also known as lifelong learning, is a special case of multitask learning, where tasks arrive sequentially rather than simultaneously \citep{thrun1998lifelong,eaton2013lifelong}.
In this paper, we leverage Bayesian inference in the knowledge base update \citep{ebrahimi2019uncertainty}.
Like generic multitask learning, continual learning also faces the stability-plasticity trade-off \citep{de2021continual},
which balances between performance on new tasks and resistance to catastrophic forgetting \citep{kirkpatrick2017overcoming}.
Current methods identify models to address trade-off preferences by techniques such as loss regularization \citep{servia2021knowing}, which means that at least one model must be trained per preference. 

Researchers in CL have proposed various approaches to retain knowledge while updating a model on new tasks. These include modified loss landscapes for optimization \citep{farajtabar2020orthogonal}, preservation of critical pathways via attention \citep{Abati2020Conditional},
memory-based methods \citep{lopez2017gradient},
shared representations \citep{lee2019learning}, and dynamic representations \citep{Bulat2020TF}.

One sub-category of CL is Bayesian Continual learning (BCL), which leverages probabilistic methods, defined as follows \citep{variational}.
\begin{definition}[Bayesian Continual Learning]
    \label{def:bcl}
    BCL is a class of CL procedures, which starts with a prior distribution $q_0$. At a task $i \in \{1, 2, ...\}$, we are given i.i.d. training data $\{(x_s,y_s)\}_{s=1}^{n_i} \subset \mathcal{X}\times \mathcal{Y}$ of inputs and outputs (the $x$'s and $y$'s, respectively).
    The Bayesian model is updated from prior distribution $q_{i-1}$ to posterior $q_i$ using the labeled data. Then, $q_i$ is used as a prior in task $i+1$.
\end{definition}

\edittmlr{In BCL~\citep{variational,ebrahimi2019uncertainty}}, each task is associated with a data generating process parameterized by $\theta$. The latter is postulated to be a random quantity, which at the beginning of the analysis has a prior distribution, $\theta \sim q_0$. After training on the available data, the prior distribution is turned into posterior, $\theta \sim q_1$ via Bayes' theorem. The posterior $q_1$ is the revised parameter distribution after having learned from the data pertaining to the first task to complete. \edittmlr{The posterior is then used as a prior for the next task.}

\edittmlr{
A significant application of CL is continual reinforcement learning (CRL). In reinforcement learning~\citep{li2017deep}, an agent learns an optimal control policy to maximize a cumulative reward in an environment. Formally, the system contains a state space $\mathcal{S}$ and an action space $\mathcal{A}$, with an underlying Markov decision process (MDP) that characterizes the state transition upon an action taken. The reward is a real-valued function $r: \mathcal{S} \times \mathcal{A} \times \mathcal{S} \to \mathbb{R}$ that maps from the previous state, the action, and the next state to a score. When the MDP is non-stationary, researchers have proposed CRL algorithms to solve for optimal policies~\citep{khetarpal2022towards}. The stationary nature could be broken due to different drivers, including active changes of the agent, passive changes in the environment, and a hybrid of both.
}

\edittmlr{
\subsection{Stability-plasticity Trade-offs}
\label{subsec:sp_trade_off}

Researchers in multitask and continual learning have explored the trade-off nature among tasks. The reason behind this trade-off is due to using shared parameters for multiple tasks. Therefore, optimizing the parameters would be a multi-objective optimization problem, which would lead to a non-singleton Pareto set of parameters \citep{caruana1997multitask,sener2018multi}. 

In the context of continual learning, task performance trade-offs are formally known as stability-plasticity trade-offs, a terminology first introduced in 2013 \citep{mermillod2013stability} and also known as the stability-plasticity dilemma. Here, stability means the ability to maintain performance in previously encountered tasks, and plasticity means the ability to obtain performance in new tasks. Inspired by biological neural systems, this trade-off describes a continuum of catastrophic forgetting. Specifically, a learned representation has high stability (and low stability) if its learned internal representations of distinct objectives have small overlaps in parameters. Then, a backpropagation that updates the parameters of one objective would have little effect on other objectives. However, it is impossible to completely segregate the parameters of objectives, as it would lead to an explosion in the total number of parameters required when learning more tasks. Consequently, catastrophic forgetting is almost inevitable given a limited number of parameters, and the stability-plasticity trade-off implies a Pareto front of parameter configurations, which is a continuous manifold.

Researchers have been developing continual learning algorithms with the awareness of this trade-off. For instance, the objective of stability and plasticity can be separately learned by distinct sub-networks \citep{kim2023achieving,lu2025rethinking}, and multiple knowledge bases can be used to balance between stability and plasticity \citep{mahmoodi2025flashbacks}. Moreover, meta learning is also used to capture the common knowledge across tasks, and use only a small overhead on top of the learned common knowledge to achieve task-specific parameters \citep{caccia2020online,chen2023stability}. However, all these methods are inflexible when we demand models at specific trade-offs. For example, the primary model in meta learning does not have a representation of trade-off as input. We have a detailed discussion on meta learning in Appendix \ref{why_bcl}.

\subsection{Continual Learning under Specific Trade-offs (CLuST)}
\label{subsec:cl_preferences}

Although we introduced the term ``CLuST'', previous research have already discussed the relevant topic of obtaining Pareto-optimal models at particular trade-off preferences. We borrow the formalization of preferences from established literature \citep{mahapatra2020multi}, where a preference is defined as a vector of non-negative real weights $\bar{w}$, with each entry $w_i$ corresponding to task $i$. 
That is, $w_i \geq w_j \iff i \succeq j$. This means that if $w_i \geq w_j$, then task $i$ is preferred to task $j$. 
However, state-of-the-art algorithms require training one model per preference, imposing large overhead when there are a large number of preferences.

Specifically, a given preference can guide the learning algorithms to find a corresponding particular model in the Pareto set. In multitask classification, researchers first consider a fixed set of preferences, each induces a single constrained subproblem, and learn one model per subproblem in parallel \citep{lin2019pareto}. This method is then enhanced by learning a hypernetwork that takes preferences as input and outputs model parameters, so that Pareto-optimal solutions can be learned with dynamic sets of preferences \citep{lin2020controllable}. Alternative architectures are also used, such as a linear preference-conditioned layer to improve computational efficiency \citep{ma2020efficient} and HNPF models \citep{gupta2021scalable}. 
In addition, the Bayesian version of hypernetworks, formally known as multi-objective Bayesian optimization (MOBO) has also been explored \citep{lin2022pareto}. This method aims to learn an entire Pareto set represented by a single distribution, with preference information being a part of the parameters. Although MOBO aims for comprehensive posterior knowledge, obtaining such knowledge could be impractical: First, MOBO assumes data at any preference point is available by preference-based sampling, and even if so, it requires training on data sampled at \textit{all} preferences. Second, MOBO is unable to efficiently update the distribution when task data arrives sequentially (i.e. continual learning). When more data arrives, MOBO has to update the entire distribution with an enormous number of parameters.

Learning under specific preferences is less explored when tasks are in sequential temporal orders, i.e., CLuST. State-of-the-art CLuST methods leverage rehearsal-based algorithms. The demand of balancing between stability-plasticity trade-off is first identified by researchers in 2021 \citep{raghavan2021formalizing}. Then, preferences are used as regularization factors on rehearsal data in 2023 \citep{kim2023achieving}. That is, given a vector of preferences $\bar{w} = (w_1, \cdot, w_m)$ on $m$ tasks, its Pareto-optimal model is learned by a loss function $L = \sum_{i=1}^{m}L_i$, where $L_i$ is a given loss function on the cached rehearsal data of task $i$. This approach can be equipped with various advanced rehearsal-based CL algorithms, such as GEM \citep{lopez2017gradient}, A-GEM \citep{chaudhry2018efficient}, DER, and DER++ \citep{buzzega2020dark}. We also identify the major drawback of using rehearsal-based approaches in CLuST: they have to \textit{train} the model for every preference, causing inefficiency when there are a large number of preferences to be addressed. Instead of rehearsal-based methods, we propose to efficiently \textit{generate} the majority of Pareto-optimal models at different preferences from only a few models trained.

In multitask and continual reinforcement learning, starting from 2024, researchers also designed methods for learning Pareto sets at different preferences. For instance, multi-objective gradients can be balanced with preference weights \citep{xu2020prediction}, and Bellman equations can be combined with preferences \citep{basaklar2022pd}. Hypernetworks are also utilized. For example, Hyper-MORL learns a mapping from preferences to Pareto-optimal control policies using a hypernetwork representation \citep{shu2024learning}. An alternative approach is to train the hypernetwork that maps to only a subset of parameters, which will then be appended to the policy parameters learned independently \citep{liu2025pareto}.

}

\section{Formulating the CLuST Problem}
\label{sec:problem}

In this section we formalize the CLuST problem. We consider domain-incremental learning \citep{van2019three,shi2023unified} for classification models, with an unbounded number of stability-plasticity trade-off preferences at each task. The goal is to construct a learning algorithm with training overhead independent of the number of preferences, and that enjoys performance guarantees.

\subsection{Assumptions}
\label{subsec:assumptions}

Let $\mathcal{X}$ be the space of inputs, and $\mathcal{Y}$ be the space of labels. In a typical classification problem, $\mathcal{X}$ will be a subset of a Euclidean space, and $\mathcal{Y}$ a finite set. In a typical regression problem, $\mathcal{Y}$ will too be a subset of a Euclidean space. In general, we do not limit ourselves to either scenario. As a consequence, we let the input and the output spaces be generic sets. Call $\Delta_{\mathcal{XY}}$ the space of all possible distributions on $\mathcal{X} \times \mathcal{Y}$. A task $i$ is associated with a distribution $p_i \in \Delta_{\mathcal{X} \mathcal{Y}}$, from which labeled data can be drawn i.i.d.

A common assumption in CL is {\em task similarity}, which researchers formalize as closeness in data distributions \citep{wang2024comprehensive}. \edit{Here, we have the same assumption. To define task similarity, we first recall the concept of 2-Wasserstein metric \citep{encyc_dist} on the data distributions.}

\edit{
\begin{definition}[2-Wasserstein Metric on $\Delta_{\mathcal{XY}}$]
\label{def:2-wass}
The 2-Wasserstein metric is a distance $||\cdot||_{W_2}$ that measures the dissimilarity between two probability distributions $p$ and $p' \in \Delta_{\mathcal{XY}}$, with
\begin{equation}
    \begin{split}
    \|p-p'\|_{W_2} := \Big(\inf_{\gamma\in\Gamma(p,p')} \mathbb{E}_{((x_1,y_1),(x_2,y_2))\sim\gamma} [d((x_1,y_1),(x_2,y_2))^2]\Big)^{\frac{1}{2}},
    \end{split}
\end{equation}
where
\begin{enumerate}
    \item $\Gamma(p,p')$ is the set of all couplings of $p$ and $p'$. A coupling $\gamma$ is a joint probability measure on $(\mathcal{X}\times\mathcal{Y}) \times (\mathcal{X}\times\mathcal{Y})$ whose marginals are $p$ and $p'$ on the first and second factors, respectively, and
    \item $d$ is the product metric endowed to $\mathcal{X}\times\mathcal{Y}$.\footnote{We denote by $d_\mathcal{X}$ and $d_\mathcal{Y}$ the metrics endowed to $\mathcal{X}$ and $\mathcal{Y}$, respectively.}
\end{enumerate}
\end{definition}

With Definition \ref{def:2-wass}, we have the following assumption.
}

\begin{assumption}[Task Similarity]
\label{assum:task_similarity}
\textit{For all task $i$, $p_i \in \mathcal{F}$, where $\mathcal{F}$ is a convex subset of $ \Delta_{\mathcal{XY}}$. Also, we assume that the diameter of $\mathcal{F}$ is some $r>0$, that is, $\sup_{p,q \in \mathcal{F}}\|p-q\|_{W_2} \leq r$,
where $\|\cdot\|_{W_2}$ denotes the $2$-Wasserstein distance.}
\end{assumption}

Assumption \ref{assum:task_similarity} states that the true data-generating processes about different tasks are not too distant. 
In addition, such a notion of ``being not too distant'' is entirely in the hands of the user, via the choice of radius $r$. 
This assumption means that we do not expect very dissimilar tasks. That is, we do not consider e.g. a situation in which a robot is able to fold our clothes (task 1) and then deliver a payload in combat zone (task 2). \edit{Details of this assumption, including its importance, and why 2-Wasserstein metric is chosen, are explained in Section \ref{subsec:details_of_assumption}.}

Next, we assume the parameterization of class $\mathcal{F}$. 

\begin{assumption}[Parameterization of Task Distributions]
    \label{assum:parameterization}
    \textit{Every distribution $F$ in $\mathcal{F}$ is parameterized by $\theta$, a parameter belonging to a parameter space $\Theta$.}
\end{assumption}

\edit{Let us give an example of a parameterized family $\mathcal{F}$. Suppose that we have one-dimensional data points and labels. At each task $i$, the marginal on $\mathcal{X}$ of $p_i$ is a Gaussian $\mathcal{N}(\mu,1)$, while the conditional distribution of label $y\in\mathcal{Y}$ given data point $x\in\mathcal{X}$ is a categorical $\text{Cat}(\vartheta)$. Hence, the parameter for $p_i$ is $\theta=(\mu,\vartheta)$, and it belongs to $\Theta=\mathbb{R} \times \mathbb{R}^{|\mathcal{Y}|}$. In this situation, an example of a family $\mathcal{F}$ that satisfies Assumptions \ref{assum:task_similarity} and \ref{assum:parameterization} is the convex hull of distributions that can be decomposed as we just described, and whose distance according to the $2$-Wasserstein metric does not exceed some $r>0$.}


Notice that all tasks share the same input space $\mathcal{X}$ and label space $\mathcal{Y}$, and we do not have task id's as an additional input, so learning is domain-incremental \citep{van2019three}. 

Preferences over stability-plasticity trade-offs is also an established concept \citep{mahapatra2020multi,servia2021knowing}. We formalize it as follows.

\begin{definition}[Stability-plasticity Trade-off Preferences over Tasks]
\label{def:preference}
\textit{Consider $k$ tasks with underlying data distributions $p_1, p_2, \dots, p_k$. We express a stability-plasticity trade-off preference (or simply, a preference) over them through a probability vector $\bar{w} = (w_1, w_2, \dots, w_k)^\top$. That is, $w_i\geq 0$ for all $i \in\{1,\ldots,k\}$, and $\sum_{i=1}^k w_i=1$.}
\end{definition}

Based on Definition \ref{def:preference}, given a preference $\bar{w}$ over all $k$ tasks encountered, the personalized model for the user aims to learn the distribution $p_{\bar{w}} := \sum_{i=1}^k w_ip_i$.
Since $p_{\bar{w}}$ is the convex combination of $p_1, \dots, p_k$, thanks to Assumptions \ref{assum:task_similarity} and \ref{assum:parameterization}, we have $p_{\bar{w}} \in \mathcal{F}$, and therefore it is also parameterized by some $\theta \in \Theta$. 

Like in existing BCL literature \citep{variational,kessler,servia2021knowing}, we assume that the learning procedure is Bayesian domain-incremental learning. That is, the learning follows BCL as in Definition~\ref{def:bcl}, and all data and label distributions are similar, as per Assumption \ref{assum:task_similarity}, without any knowledge of task id's.
At any task $k$, we are given at least one user preference $\bar{w}$ over the $k$ tasks so far. The data drawn for task $k+1$ will not be available until we have finished learning models for all preferences on task $k$. 

\edittmlr{Generally, domain-incremental learning is harder than task-incremental learning because the former uses strictly less information. Extension from domain-incremental to task-incremental learning is trivial. To achieve such an extension, we only need to provide task ids as an additional input at both training and testing time.}

\edit{
\subsection{Details of Assumption \ref{assum:task_similarity}}
\label{subsec:details_of_assumption}

We need Assumption \ref{assum:task_similarity} in light of the results in \citet{kessler}, where it is shown that 
misspecified models can suffer from catastrophic forgetting even when Bayesian inference is carried out exactly. By requiring that $\text{diam}(\mathcal{F})= r$, we control the amount of misspecification via $r$. In \citet{kessler}, the authors design a new approach -- called Prototypical Bayesian Continual Learning, or ProtoCL -- that allows dropping Assumption \ref{assum:task_similarity} while retaining the Bayesian benefit of remembering previous tasks. Because the main goal of this paper is to come up with a procedure that allows the designer to express preferences over the tasks, we retain Assumption \ref{assum:task_similarity}, and we work in the classical framework of Bayesian Continual Learning. In the future, we plan to generalize our results by operating with ProtoCL.\footnote{In \citet{kessler}, the authors also show that if there is a task dataset imbalance, then the model can forget under certain assumptions. To avoid complications, in this work we tacitly assume that task datasets are balanced.}


\edittmlr{Generally, we need a distance metric on distributions (i.e., non-negative, symmetric, and following the triangular rule), and we are aware of alternative metrics such as the square root of Jensen-Shannon (JS) divergence \citep{fuglede2004jensen}. However, most of these metrics, including square root of JS divergence, do not have a closed-form expression on Gaussians, which are commonly used in Bayesian inference.} We choose the $2$-Wasserstein distance for the ease of computation \edittmlr{as it has an efficient closed-form}. In practice, when all distributions are modeled by Bayesian neural networks with independent Gaussian weights and biases, we have
\begin{equation}
    \label{eq:2_wasserstein}
    \|q_1 - q_2\|^2_{W_2} = \|\mu_{q_1}^2 - \mu_{q_2}^2\|_2^2 + \|\sigma_{q_1}^2 \mathbf{1} - \sigma_{q_2}^2 \mathbf{1}\|_2^2,
\end{equation}
where $\|\cdot\|_2$ denotes the Euclidean norm, $\mathbf{1}$ is a vector of all $1$'s, and $\mu_q$ and $\sigma_q$ are respectively the mean and standard deviation of a multivariate normal distribution $q$ with independent dimensions, $q=\mathcal{N}(\mu_q,\sigma^2_q I)$, $I$ being the identity matrix. Therefore, computing the $W_2$-distance between two distributions is equivalent to computing the difference between their means and variances.
}

\subsection{Main Problem}
\label{subsec:main_problem}

We aim to design a domain-incremental learning algorithm that generates one model per preference over tasks, with an unbounded number of preferences over a finite number of tasks. Given a significance level $\alpha \in [0, 1]$, in any task $k$, the algorithm should satisfy:
\begin{enumerate}
    \item \textbf{Zero-shot preferred model generation}. 
    A fixed training cost is needed at each task, regardless of the number of preferences. In other words, we only need to train a small fixed number of models per task, and after that, model generation for any preference is zero-shot.
    \item \textbf{Probabilistic Pareto-optimality}. Let $\hat{q}_{\bar{w}}$ denote the convex combination of the estimated parameter distributions for tasks $1, \dots, k$ using preference weights $\bar{w}$. IBCL should be able to identify the smallest subset of model parameters, $\Theta_{\hat{q}_{\bar{w}}}^{\alpha} \subset \Theta$ (written as $\Theta_{\bar{w}}^\alpha$ for notational convenience from now on), that contains the Pareto-optimal parameter $\theta^\star_{\bar{w}}$ with high probability. Formally, $\Theta_{\bar{w}}^\alpha$ is the minimal set that satisfies $\Pr_{\theta^\star_{\bar{w}} \sim \hat{q}_{\bar{w}}} [\theta^\star_{\bar{w}} \in \Theta_{\bar{w}}^\alpha] \geq 1 - \alpha$.
    \item \textbf{Sublinear buffer growth}. The memory overhead accumulated by IBCL throughout the tasks should grow sublinearly in the number of tasks. 
\end{enumerate}

\section{Imprecise Bayesian Continual Learning}
\label{sec:ibcl}

Figure \ref{fig:ibcl_workflow} shows that IBCL performs two steps in each task. First, it updates the knowledge base as a FGCS (Section \ref{subsec:fgcs_update}). Second, it uses a convex combination of the extreme elements of the FGCS, instead of retraining, to zero-shot generate models under given preferences (Section \ref{subsec:zero_shot_adaptation}).

\subsection{FGCS Knowledge Base Update}
\label{subsec:fgcs_update}

As discussed in the Introduction, we take a Bayesian Continual Learning (BCL) approach, that is, the parameter $\theta$ of the distribution $p_k$ related to task $k$ is viewed as a random variable distributed according to some distribution $q$.

At the beginning of the analysis, we specify $m$ many such distributions, $\text{ex}[\mathcal{Q}_0]=\{q_0^1,\ldots,q_0^m\}$. They are the ones that the designer deems plausible -- a priori -- for the parameter $\theta$ of the task $1$. Upon observing data from task $1$, we learn a set $\mathcal{Q}^{tmp}_1$ of posterior parameter distributions and buffer them as extreme elements $\text{ex}[\mathcal{Q}_1]$ of the FGCS $\mathcal{Q}_1$ corresponding to task $1$. We proceed in a similar way for successive tasks $i\geq 2$.

\begin{algorithm}
\caption{FGCS Knowledge Base Update}
\label{alg:fgcs_update}
\begin{algorithmic}[1]
\STATE {\bfseries Input:} Current knowledge base in the form of FGCS extreme elements $\text{ex}[\mathcal{Q}_{i-1}]= \{q^1_{i-1}, \dots, q^m_{i-1}\}$, observed labeled data $(\bar{x}_i, \bar{y}_i)$ at task $i$, and distribution distance threshold $d \geq 0$ 
\STATE {\bfseries Output:} Updated extreme elements $\text{ex}[\mathcal{Q}_i]$
\STATE $\mathcal{Q}_i^{tmp} \gets \emptyset$
\FOR{$j \in \{1, \dots, m\}$}
\STATE $q_i^j \gets \mathsf{variational\_inference}(q_{i-1}^j, \bar{x}_i, \bar{y}_i)$
\STATE $d_i^j \gets \min_{q \in \text{ex}[\mathcal{Q}_{i-1}]}\|q_i^j - q\|_{W_2}$
\IF{$d_i^j \geq d$}
    \STATE $\mathcal{Q}_i^{tmp} \gets \mathcal{Q}_i^{tmp} \cup \{q_i^j\}$ \hfill\COMMENT{Store distribution $q_i^j$}
\ELSE
    \STATE $q_i^j \gets \argmin_{q \in \text{ex}[\mathcal{Q}_{i-1}]}\|q_i^j - q\|_{W_2}$ \hfill\COMMENT{Fetch the stored distribution with minimal distance to $q_i^j$, and \\ \hfill overwrite $q_i^j$ with a pointer to that distribution}
    \STATE $\mathcal{Q}_i^{tmp} \gets \mathcal{Q}_i^{tmp} \cup \{q_i^j\}$ \hfill\COMMENT{Only a pointer is stored}
\ENDIF
\ENDFOR
\STATE $\text{ex}[\mathcal{Q}_i] \gets \text{ex}[\mathcal{Q}_{i-1}] \cup \mathcal{Q}_i^{tmp}$
\end{algorithmic}
\end{algorithm}

In Algorithm \ref{alg:fgcs_update}, we use notation $(\bar{x}_i, \bar{y}_i)$ to denote vectors of inputs and outputs pertaining to task $i$. In task $i$, we approximate $m$ posteriors $q_i^1, \dots q_i^m$ by variational inference from buffered priors $q_{i-1}^1, \dots q_{i-1}^m$ one-by-one (line 3). Variational inference is a standard Bayesian learning procedure that minimizes the evidence lower bound (ELBO) loss to infer a posterior distribution from a prior and observed data \citep{variational}. 
\edit{However, if a posterior is very similar to an existing prior in the cache, it would give estimations with negligible differences to that prior. In this case, buffering this new posterior would be a waste in space.}
Therefore, we use a distance threshold $d$ to exclude the posteriors that are similar to the distributions that are already buffered (lines 4 - 10). When distributions similar to $q_i^j$ (within threshold $d$) are found in the knowledge base, we store a pointer to the distribution with minimal distance in place of $q_i^j$, and do not memorize $q_i^j$ (lines 8-9). The posteriors that are sufficiently different from the already buffered distributions are then appended to the knowledge base (line 12).

Notice that the memory overhead of Algorithm \ref{alg:fgcs_update} is remembering at most $m$ distributions into $\mathcal{Q}_i^{tmp}$ at line 6. In practice, $m$ is a small constant (we choose $m=3$ in our experiments). Therefore, the memory complexity is $O(1)$. Moreover, some newly memorized distributions may be discarded and replaced by a previous distribution in cache at line 8. With larger threshold $d$ at line 5, more distributions are discarded at lines 8-9. The amortized memory complexity analysis under different threshold $d$'s is discussed in our ablation studies: see Section \ref{subsec:results}.

The time complexity of Algorithm \ref{alg:fgcs_update} is dominated by variational inference at line 3. Every variational inference costs a non-negligible training time, which we denote as $O(v)$. There are a total of $m$ variational inferences computed for each task. Since $m$ is a constant, the overall time complexity remains $O(v)$.

\subsection{Zero-shot Generation of User Preferred Models}
\label{subsec:zero_shot_adaptation}

Next, after having updated the FGCS extreme elements for task $i$, we are given a set of user preferences. For each preference $\bar{w}$, we need to identify the Pareto-optimal parameter $\theta^\star_{\bar{w}}$ for the preferred data distribution $p_{\bar{w}}$. This procedure can be divided into two steps as follows.

First, we find the parameter distribution $\hat{q}_{\bar{w}}$ via a convex combination of the extreme elements in the knowledge base, whose weights correspond to the entries of preference vector $\bar{w}=\{w_1,\ldots,w_i\}$ over the $i$ tasks so far. That is,
\begin{equation}
\label{eq:cvx_combination}
    \begin{split}
        &\hat{q}_{\bar{w}} = \sum_{k=1}^i\sum_{j=1}^{m_k} \beta_k^j q_k^j,
        \text{ where } \sum_{j=1}^{m_k} \beta_k^j = w_k, \text{ and } \beta_k^j \geq 0, \text{ for all } j \text{ and all } k.
    \end{split}
\end{equation}
Here, $q_k^j$ is a buffered extreme point of FGCS $\mathcal{Q}_k$, i.e. the $j$-th parameter posterior of task $k$. The weight $\beta_k^j$ of this extreme point is decided by preference vector entry $\bar{w}_j$. In implementation, if we have $m_k$ extreme elements stored for task $k$, we can choose equal weights $\beta_k^1 = \dots = \beta_k^m = w_k / m_k$. For example, if we have preference $\bar{w} = (0.8, 0.2)^\top$ on two tasks so far, and we have two extreme elements per task stored in the knowledge base, we can use $\beta_1^1 = \beta_1^2 = 0.8/2=0.4$ and $\beta_2^1 = \beta_2^2 = 0.2/2=0.1$. \edittmlr{In practice, we use $m_k = 1$ for all tasks. That is, we learn one parameter posterior $q_k$ at every task $k$, and therefore $\hat{q}_{\bar{w}} = \sum_{k=1}^i w_k q_k$. We also have ablation studies on different $m_k$. See Section \ref{sec:experiments} for details.}


The following proposition ensures us that it is equivalent to express preferences over tasks $k$, or over the parameter distributions $q_k^j$ associated with each task, thus justifying the definition of $\hat{q}_{\bar{w}}$ in \eqref{eq:cvx_combination}.

\begin{proposition}[Selection Equivalence]
\label{thm:hat_q_selection}
Let $q_k^j$ be an extreme point posterior of $\mathcal{Q}_i$ learned from the $j$-th prior at task $k \in \{1,\ldots,i\}$. 
For any preference $\bar{w} = (w_1, \dots, w_i)^\top$ on tasks $\{1,\ldots,i\}$,
there exists a probability vector $\bar{\beta} = (\beta_1^1, \dots, \beta_1^{m_1}, \dots, \beta_i^1, \dots, \beta_i^{m_i})^\top$, with $\sum_{j=1}^{m_k} \beta_k^j = w_k$, for all $k \in \{1,\ldots,i\}$, such that
\begin{equation*}
    \hat{q}_{\bar{w}} = \sum_{k=1}^i\sum_{j=1}^{m_k} \beta_k^j q_k^j. 
\end{equation*}
In other words, selecting a precise distribution $\hat{q}_{\bar{w}}$ from $\mathcal{Q}_i$ is equivalent to specifying a preference weight vector $\bar{w}$ on tasks $\{1,\ldots,i\}$.
\end{proposition}

We refer to Appendix \ref{app-proofs} for the proof. 

Second, we compute the HDR $\Theta_{\bar{w}}^\alpha \subset \Theta$ from $\hat{q}_{\bar{w}}$. This is implemented using a standard procedure that locates the smallest region in the parameter space whose enclosed probability mass is (at least) $1-\alpha$, according to $\hat{q}_{\bar{w}}$. 
This procedure can be routinely implemented, e.g., in $\mathsf{R}$, using package $\mathsf{HDInterval}$ \citep{hdr_computation}. 
As a result, we locate the smallest set of parameters $\Theta_{\bar{w}}^\alpha \subset \Theta$ associated with the preference $\bar{w}$. This subroutine is formalized in Algorithm \ref{alg:hdr_computation}.
Notice that this computation is simply a convex combination, i.e., a weighted sum of all distributions in $\text{ex}[\mathcal{Q}_i]$. \edit{The summation is defined under 2-Wasserstein metric. As explained in Section \ref{subsec:details_of_assumption}, the convex combination has a computational complexity proportional to the parameterization size of distributions. In practice, we first extract features from the data and use a relatively small parameterization for distributions on top of the extracted features. Please see Section \ref{sec:experiments} for details. Furthermore, this algorithm does not produce any memory overhead.}

\begin{algorithm}
\caption{Preference HDR Computation}
\label{alg:hdr_computation}
\begin{algorithmic}[1]
\STATE {\bfseries Input:} Knowledge base $\text{ex}[\mathcal{Q}_i]$ with $m_k$ extreme elements saved for task $k \in \{1, \dots, i\}$, preference $\bar{w}$ on the $i$ tasks, significance level $\alpha \in [0, 1]$
\STATE {\bfseries Output:} HDR $\Theta_{\bar{w}}^\alpha \subset \Theta$
\FOR{$k = 1, \dots, i$}
    \STATE $\beta_k^1 = \dots = \beta_k^m \gets w_k / m_k$
\ENDFOR
\STATE $\hat{q}_{\bar{w}} = \sum_{k=1}^i\sum_{j=1}^{m_k} \beta_k^j q_k^j$
\STATE $\Theta_{\bar{w}}^\alpha \gets \mathsf{hdr}(\hat{q}_{\bar{w}},\alpha)$
\end{algorithmic}
\end{algorithm}

\subsection{Overall IBCL Algorithm and Analysis}
\label{subsec:ibcl_overall}

From the two subroutines in Sections \ref{subsec:fgcs_update} and \ref{subsec:zero_shot_adaptation}, we construct the overall IBCL algorithm as in Algorithm \ref{alg:ibcl}. 

\begin{algorithm}
\caption{Imprecise Bayesian Continual Learning}
\label{alg:ibcl}
\begin{algorithmic}[1]
\STATE {\bfseries Input:} Prior distributions $\text{ex}[\mathcal{Q}_0] = \{q^1_0, \dots, q^m_0\}$, hyperparameters $\alpha$ and $d$
\STATE {\bfseries Output:} HDR $\Theta^\alpha_{\bar{w}}$ for each given preference $\bar{w}$ at each task $i$
\FOR{task $i = 1, 2, ...$}
    \STATE $\bar{x}_i, \bar{y}_i \gets$ sample $n_i$ labeled data points i.i.d. from $p_i$
    \STATE $\text{ex}[\mathcal{Q}_i] \gets \mathsf{fgcs\_update}(\text{ex}[\mathcal{Q}_{i-1}], \bar{x}_i, \bar{y}_i, d)$ \hfill\COMMENT{Algorithm \ref{alg:fgcs_update}}
    \WHILE{user has a new preference}
        \STATE $\bar{w} \gets$ user input
        \STATE $\Theta^\alpha_{\bar{w}} \gets \mathsf{preference\_hdr\_computation}(\text{ex}[\mathcal{Q}_i], \bar{w}, \alpha)$ \hfill\COMMENT{Algorithm \ref{alg:hdr_computation}}
    \ENDWHILE
\ENDFOR
\end{algorithmic}
\end{algorithm}

For each task, in line 3, we use Algorithm \ref{alg:fgcs_update} to update the knowledge base by learning $m$ posteriors from the current priors.
In lines 5-7, according to a user-given preference over all tasks so far, we obtain the HDR of the model associated with preference $\bar{w}$ in zero-shot via Algorithm \ref{alg:hdr_computation}. Notice that this HDR computation does not require the initial priors $\text{ex}[\mathcal{Q}_0]$, so we can discard them once the posteriors $\mathcal{Q}_1$ are learned in the first task. 

The overall time complexity is dominated by the $O(v)$ variational inference in Algorithm \ref{alg:fgcs_update}, used as a subroutine in line 3. Compared to variational inference, the $O(1)$ preferred model generation via convex combination in Algorithm \ref{alg:hdr_computation} in line 7 is negligible. Therefore, the overall time complexity for $n$ tasks is $O(nv)$, regardless of preferred model generation. Moreover, as the memory complexity at each task is contributed by $O(1)$ memorization of posteriors by Algorithm \ref{alg:fgcs_update}, the total memory complexity is $O(n)$. Some of these posteriors will be discarded, as discussed in Section \ref{subsec:fgcs_update}. Therefore, in the amortized case, Algorithm \ref{alg:ibcl} ensures \textbf{sublinear buffer growth}. \edittmlr{If, at some point of the continual learning process, all newly learned posteriors are within the distant threshold to some cached posterior, the buffer will stop growing and the total memory cost becomes constant.}

The following proposition ensures that IBCL locates the user-preferred Pareto-optimal model with high probability.

\begin{proposition}[Probabilistic Pareto-optimality]
\label{thm-hdr}
    \textit{Pick any $\alpha\in [0,1]$. The Pareto-optimal parameter $\theta^\star_{\bar{w}}$, i.e., the ground-truth parameter for $p_{\bar{w}}$, belongs to $\Theta_{\bar{w}}^{\alpha}$ with probability at least $1-\alpha$ under distribution $\hat{q}_{\bar{w}}$. In formulas, $\Pr_{\theta^\star_{\bar{w}} \sim \hat{q}_{\bar{w}}}\left[\theta^\star_{\bar{w}}\in \Theta_{\bar{w}}^{\alpha} \right]\geq 1-\alpha$.}
\end{proposition}
Proposition \ref{thm-hdr} gives us a $(1-\alpha)$-guarantee in obtaining Pareto-optimal models for given task trade-off preferences. In other words, the Pareto-optimal parameter $\theta^\star_{\bar{w}}$ is guaranteed to belong to the Highest Density Region $\Theta_{\bar{w}}^{\alpha}$ that we build, with high probability. Our algorithm does not find the parameter $\theta^\star_{\bar{w}}$ itself, but instead the narrowest region $\Theta_{\bar{w}}^{\alpha}$ that contains it with high probability. In spirit, this result is very similar to what conformal prediction does (for predicted outputs, rather than parameters of interest \citep{angelopoulos2021gentle}).
Consequently, the IBCL algorithm enjoys the \textbf{probabilistic Pareto-optimality} targeted by our main problem. Please refer to Appendix \ref{app-proofs} for the proof. 

\edittmlr{
\subsection{Reinforcement Learning using IBCL}
\label{subsec:ibcl_rl}

Although the focus of this paper is classification tasks, we also outline a solution to apply IBCL to reinforcement learning. Here, the system is formalized as an MDP $\mathcal{M} = (\mathcal{S}, \mathcal{A}, f, r, \gamma)$, where $\mathcal{S}$ and $\mathcal{A}$ are state space and action space, respectively, $f(s_{t+1} | s_t, a_t)$ is the Markovian transition probability from a previous state $s_t \in \mathcal{S}$, an action $a_t \in \mathcal{A}$ to a next state $s_{t+1} \in \mathcal{S}$, $r: \mathcal{S} \times \mathcal{A} \times \mathcal{S} \to \mathbb{R}$ is the reward function, and $\gamma \in [0, 1]$ is the discount factor when computing cumulative reward. Generally, in a multitask or continual learning setting, every task $i$ has a task-specific MDP $\mathcal{M}_i = (\mathcal{S}_i, \mathcal{A}_i, f_i, r_i, \gamma_i)$.

For reinforcement learning, the parameter $\theta$ parameterizes a control policy $\pi_\theta : \mathcal{S} \to \mathcal{A}$, where $\mathcal{S}$ and $\mathcal{A}$ are state space and action space, respectively. A parameter distribution $q$ is a Gaussian over $\theta$. Like classification, this distribution can also be updated from a prior by observed data. The data will be produced by trajectories explored. Specifically, we build Imprecise Bayesian Continual Reinforcement Learning (IBCRL) on top of Bayesian policy gradient \citep{ghavamzadeh2006bayesian}. In particular, Bayesian policy gradient is a reinforcement learning version of variational inference that replaces the log likelihood with the cumulative rewards of sampled trajectories. We write down the pseudocode of this subroutine as in Algorithm \ref{alg:bayesian_policy_update}. Here, in every epoch, from line 4 to 10, cumulative rewards are collected from exploring in the environment (MDP), and these collected data is used to update the policy distribution at line 11. We refer to the original paper \citep{ghavamzadeh2006bayesian} for further details.

Then, if we replace the variational inference on given labeled data (as in Algorithm \ref{alg:fgcs_update}) with Bayesian policy gradient in an MDP environment, we obtain the reinforcement learning version of IBCL, denoted as Incremental Bayesian Continual Reinforcement Learning (IBCRL).


\begin{algorithm}
\caption{\edittmlr{Bayesian Policy Update \citep{ghavamzadeh2006bayesian}}}
\label{alg:bayesian_policy_update}
\begin{algorithmic}[1]
\edittmlr{
\STATE {\bfseries Input:} Prior policy distribution $q$, MDP $\mathcal{M} = (\mathcal{S}, \mathcal{A}, f, r, \gamma)$, number of epochs $e$, size of trajectory cache $\tau$, and trajectory (episode) length $T$
\STATE {\bfseries Output:} Updated policy distribution $q$
\FOR{epoch in $1, \dots, e$}
\STATE Sample policy $\pi \sim q$
\STATE Reward cache $\mathcal{R} \gets \emptyset$
    \FOR{trajectory in $1, \dots \tau$}
    \STATE Sample trajectory $s_0, s_1, \dots, s_T$ using $\pi$ in $\mathcal{M}$
    \STATE Compute cumulative reward $R \gets \sum_{t=1}^T \gamma^t r(s_{t-1}, a_t, s_t)$
    \STATE $\mathcal{R} \gets \mathcal{R} \cup \{R\}$
    \ENDFOR
    \STATE $q \gets \mathsf{bayesian\_policy\_gradient}(q, \mathcal{R})$
\ENDFOR
}
\end{algorithmic}
\end{algorithm}

}

\section{Experiments}
\label{sec:experiments}

\subsection{Setup}
\label{subsec:setup}

\textbf{Baselines.} Although there are many baseline methods for CL, only a few baselines for CLuST exist. The following CLuST baselines are selected for comparison.
\begin{enumerate}
    \item \textbf{Convex Combination of Deterministic Models.} This is the deterministic version of our approach. It trains one deterministic model per task and combine the model weights using the preference vectors.
    \item \textbf{Rehearsal-based Deterministic Models.} This is the state-of-the-art technique for CLuST \citep{lin2019pareto}. These methods memorize a subset of training data for every task encountered. Task preferences are then given as weights to regularize the loss on each task's memorized data.
    We choose (i) GEM \citep{lopez2017gradient}, (ii) A-GEM \citep{chaudhry2018efficient}, (iii) DER, and (iv) DER++ \citep{buzzega2020dark} as baselines. 
    \item \textbf{Rehearsal-based Bayesian Models.} We also compare IBCL with a Bayesian technique, VCL \citep{variational}. We equip VCL with episodic memory to make it rehearsal-based and to be able to specify a preference, an approach that has been used in \citep{servia2021knowing}. \edittmlr{We name this baseline VCL + rehearsal.}
    \item \textbf{Prompt-based.} \textit{Prompt-based CL has never been used for CLuST and, therefore, is not state-of-the-art.} Still, they are considered efficient modern CL techniques. Therefore, we attempted to specify preferences in L2P \citep{wang2022learning}, a prompt-based method, by training a learnable prompt prefix per task and using a preference-weighted sum of the prompts at inference time.
\end{enumerate}

\edittmlr{\textbf{Datasets.} We experiment on four standard continual learning benchmarks, including three image classification and one NLP, as follows.}
\begin{enumerate}
    \item 5 tasks in 20 News Group \citep{lang1995newsweeder} (news related to computers vs. not related to computers).
    \item 10 tasks in Split CIFAR-100 \citep{zenke2017continual} (animals vs. non-animals),
    \item 10 tasks in Tiny ImageNet \citep{le2015tiny} (animals vs. non-animals), and
    \item 15 tasks in CelebA \citep{liu2015faceattributes} (with vs. without attributes).
\end{enumerate}

The features are first extracted by ResNet-18 \citep{he2016deep} for the first three image benchmarks. For 20 News Group, features are extracted by TF-IDF \citep{aizawa2003information}. For each benchmark, all tasks share the same input and label space. There is no task id at training or inference time, so the algorithm does not know which task each data point comes from. Therefore, all experiments are domain-incremental according to \citet{van2019three}. \edittmlr{Still, tasks arrive one-by-one in sequential temporal orders to indicate task boundaries.}

\textbf{Evaluation metrics.} To evaluate how well a model addresses preferences, we randomly generate $n_\text{prefs}$ preferences per task, except for task 1, whose preference is always a scalar $1$. Formally, at each task $i > 1$, we have preferences $\bar{w}_i^{k} = (w_{i1}^k, \dots, w_{ii}^k)$ for $k = 1, \dots n_\text{prefs}$. 

Like all continual learning evaluations, after training on a task $i$, we first evaluate the accuracy $acc_{ij}$ of the current model on the testing sets of all tasks $j = 1, \dots, i$ encountered so far. To do so, the method (a baseline or IBCL) computes an $i$-dimensional accuracy vector for each preference. That is, we have
\begin{equation}
    \bar{acc}_{i}^{k} = (acc_{i1}^{k}, \dots, acc_{ii}^{k}) \text{, for } k = 1, \dots, n_\text{prefs}.
\end{equation}
Then, to compute the accuracy $acc_{ij}$ that takes account of all preferences, we do a weighted sum of each $acc_{ij}^k$ for all $k$. Since the preference indicates how important a task is considered, it is computed as 
\begin{equation}
    acc_{ij} = \frac{1}{\sum_{k=1}^{n_\text{prefs}} w_{ij}^k}\sum_{k=1}^{n_\text{prefs}} w_{ij}^k acc_{ij}^k.
\end{equation}

For example, at task $i=2$, suppose we have $n_\text{prefs}$ = 3, with preferences $(0.5, 0.5)$, $(0.1, 0.9)$ and $(0.8, 0.2)$. The method evaluates on the testing data of task 1 with accuracies $a$, $b$ and $c$, respectively for the 3 preferences. We therefore have $acc_{21} = \frac{1}{0.5 + 0.1 + 0.8}(0.5a + 0.1b + 0.8c)$.

In the experiments, we set $n_\text{prefs}$ = 10. After obtaining $acc_{ij}$, we use state-of-the-art continual learning metrics to evaluate performance \edittmlr{of a task $i$} with
\begin{enumerate}
    \item Average per task accuracy: $\frac{1}{i}\sum_{j=1}^i acc_{ij}$, and
    \item \edittmlr{Peak per task accuracy: $\max_{k \geq i} acc_{ki}$.}
\end{enumerate}
Starting from task $i=2$, resistance to catastrophic forgetting is evaluated by
\begin{enumerate}
\setcounter{enumi}{2}
    \item Backward transfer \citep{diaz2018don}: $\frac{1}{i-1}\sum_{j=1}^{i-1} (acc_{ij} -acc_{i-1,j})$, with a more positive value indicating higher resistance, and a more negative value indicating higher forgetting.
\end{enumerate}

\textbf{System.} Experiments are run on Intel(R) Core(TM) i7-8550U CPU @ 1.80GHz. 

\edittmlrnew{\textbf{Hyperparameter Configuration.} IBCL and the baseline methods share a subset of common hyperparameters: model architecture, learning rate, batch size, and epochs. However, they also have uncommon hyperparameters. For example, rehearsal-based baselines have rehearsal cache size, while IBCL has priors, significance-level $\alpha$, and discard threshold $d$. To ensure fair comparison, for all methods, we search hyperparamter valuations by maximizing per-task accuracy on the same validation data with the same budget, while common hyperparamters are searched in the same space. Details of hyperparameters and other experiment configurations can be found in Appendix \ref{app-experiments}.}

\subsection{Main Results}
\label{subsec:results}

\edittmlr{
\subsubsection{Performance}
\label{subsubsec:performance}

In the main experiments, we learn one parameter posterior at every task, i.e. $m_k = 1$, with significance level $\alpha = 0.01$ and threshold $d = 0.012$. These hyperparameters are selected on the basis of ablation studies, which are later presented in Section \ref{subsec:abalation}.
}

We present the three metrics (average per task accuracy, peak per task accuracy, and backward transfer) on the four datasets. The results of 20 News Group and Split CIFAR-100 are illustrated in Figure \ref{fig:20news_cifar}, and \edittmlr{CelebA and Tiny ImageNet} in Figure \ref{fig:tin_celeba}. Our results support the claim that IBCL not only achieves high performance by probabilistic Pareto-optimality, but is also efficient with zero-shot generation of models \edittmlr{and exhibits constant memory overheads}.

\begin{figure*}[!htb]
    \centering
    \includegraphics[width=\textwidth]{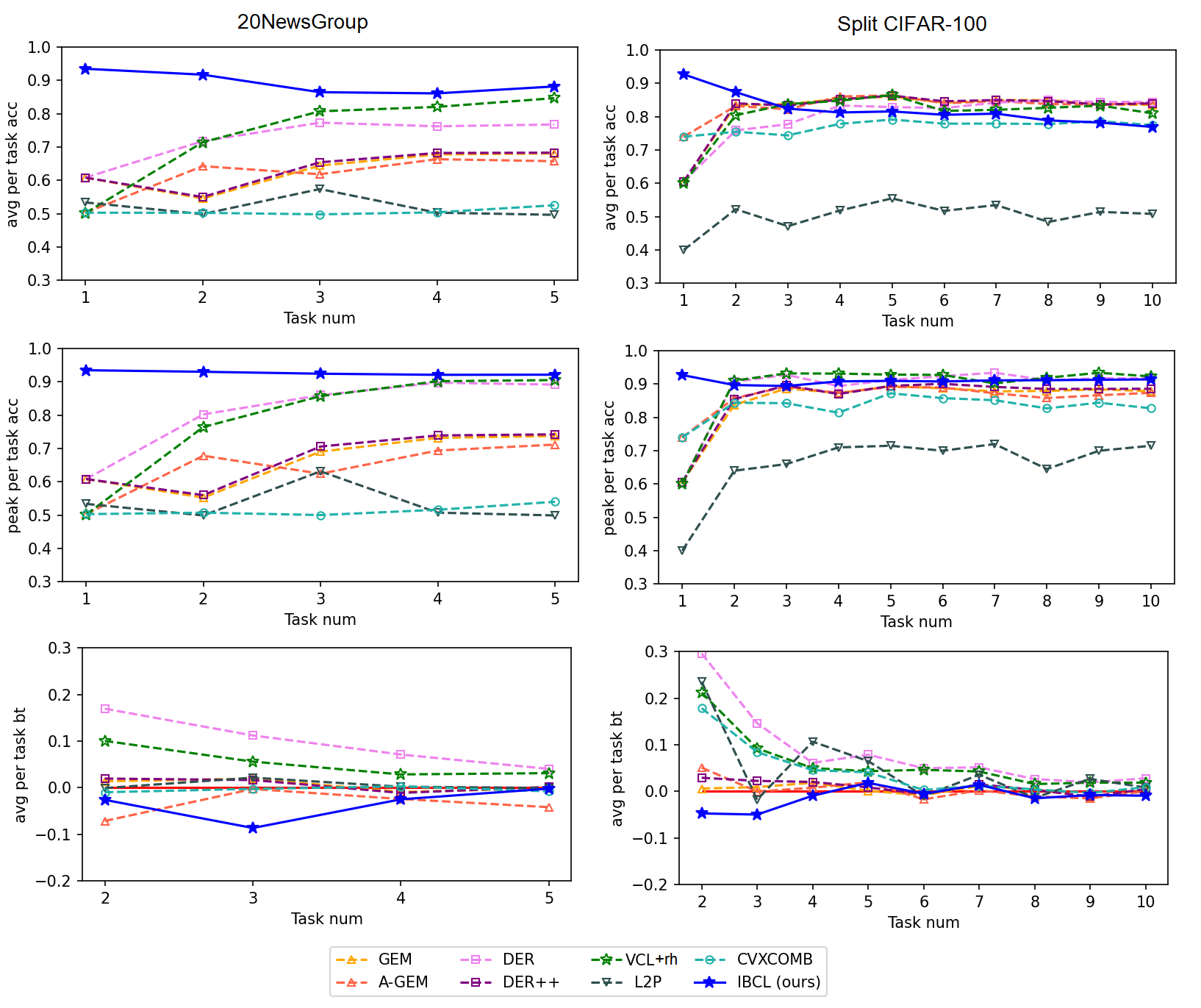}
    \caption{\edittmlr{Results of 20 News Group (left column) and Split CIFAR-100 (right column).}}
    \label{fig:20news_cifar}
\end{figure*}

\begin{figure*}[!htb]
    \centering
    \includegraphics[width=\textwidth]{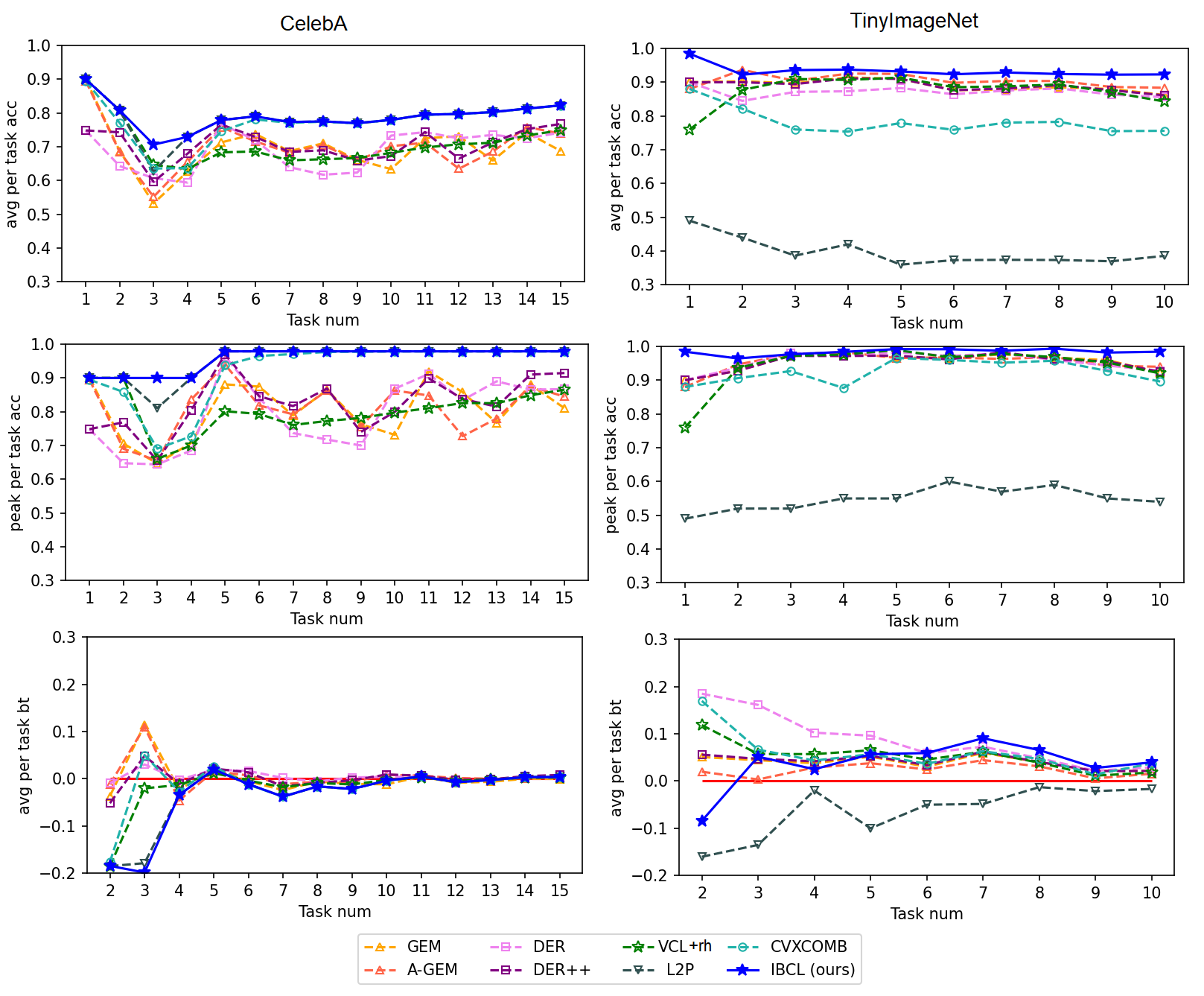}
    \caption{\edittmlr{Results of CelebA (left column) and Tiny ImageNet (right column).}}
    \label{fig:tin_celeba}
\end{figure*}

From Figures \ref{fig:20news_cifar} and \ref{fig:tin_celeba}, we can see that IBCL \edittmlr{in general} generates the model with top performance (high accuracy) in all cases, while \edittmlr{eventually converging to no catastrophic forgetting (near zero or positive backward transfer at the last task)}. This is due to the probabilistic Pareto-optimality guarantee. 
Statistically, IBCL improves on baselines by at most \edittmlr{44\%} on average per task accuracy, and by \edittmlr{45\%} on peak per task accuracy (compared to convex combination of deterministic models in 20 News Group). 
So far, to our knowledge, there is no discussion on how to specify a task trade-off preference in prompt-based continual learning, and we only make an attempt for L2P, which generally works poorly. \edittmlr{The reason for such poor performance of L2P is that we are only modifying the generated prefix embeddings to adapt to CLuST. This is only an attempt under the assumption that we do not have access to fine-tune or train the underlying large model. To provide a fairer comparison, one possible way is to directly modify the large model itself. For example, it could be augmented to a hypernetwork that accepts preferences as additional inputs. These modifications are beyond this paper and can serve as a potential future research direction.}

As illustrated in the figures, IBCL has a slightly negative backward transfer at first, but then this value converges to near-zero or positive. 
This shows that although IBCL may slightly forget the knowledge learned from the first task in the second task, it steadily retains knowledge afterward. 

Although some baselines, such as \edittmlr{VCL + rehearsal} and DER, have backward transfer higher than IBCL's in the first few tasks, \edittmlr{IBCL eventually reaches a near-zero to positive backward transfer value. This happens at the 5th task of 20 News Group, 5th task of Split CIFAR-100, 3rd task of Tiny ImageNet, and 10th task of CelebA.}

\edittmlr{
\subsubsection{Training Time Overhead}
\label{subsubsec:training_time}
}

We measure training overhead in terms of \# of batch updates required at a task in Table \ref{tab:training_overhead}. Here, $n_i$: \# of training data points at task $i$, $n_{\text{prefs}}$: \# of preferences per task, $n_{\text{mem}}$: \# of data points memorized per task in rehearsal, $n_{\text{priors}}$: \# of priors in IBCL, \edittmlr{which is 1 in main experiments,} $e$: \# of epochs and $b$: batch size. Notice that the overhead of rehearsal-based methods is proportional to $n_{\text{prefs}}$, which is potentially a large number.

\begin{table*}[!htb]
\caption{\edit{Training overhead comparison, with hyperparameters setup in Appendix \ref{app-experiments}.}}
\vskip 0.1in
\centering
\begin{tabular}{cccccc}
    \toprule
    & \multirow{2}{*}{\# batch updates at task $i$} & \multicolumn{4}{c}{\# batch updates at last task} \\
    &  & CelebA & CIFAR100 & TImgNet & 20News \\
    \midrule
    \edit{Convex Comb} & \edit{$n_\text{prefs} \times n_i \times e / b$} & \edit{95384} & \edit{12500} & \edit{9380} & \edit{29063}\\
    \midrule
    GEM & \multirow{5}{*}{\begin{tabular}[c]{@{}c@{}}$n_\text{prefs} \times (n_i + (i-1) \times n_\text{mem}) \times e / b$\end{tabular}} & \multirow{5}{*}{99747} & \multirow{5}{*}{19532} & \multirow{5}{*}{13594} & \multirow{5}{*}{35313} \\
    A-GEM &  &  &  &  &  \\
    \edit{DER} &  &  &  &  &  \\
    \edit{DER++} &  &  &  &  &  \\
    \edittmlr{VCL + rehearsal} &  &  &  &  &  \\ 
    \midrule
    L2P & $n_i \times e / b$ & \textbf{9538} & \textbf{1250} & \textbf{938} & \textbf{2907} \\ 
    \midrule
    IBCL (ours) & $n_\text{priors} \times n_i \times e / b$ & \edittmlr{\textbf{9538}} & \edittmlr{\textbf{1250}} & \edittmlr{\textbf{938}} & \edittmlr{\textbf{2907}}\\
\bottomrule
\end{tabular}
\label{tab:training_overhead}
\end{table*}

Table \ref{tab:training_overhead} shows the training overhead comparison measured in number of batch updates per task. We can see how IBCL's overhead is independent of the number of preferences $n_\text{prefs}$ because it only requires training for the FGCS but not for the preferred models. From this table, we see that in terms of batch updates, \edittmlr{IBCL costs at least 6.3\% as the rehearsal baselines (Split CIFAR-100) and at most 9.6\% (CelebA).} \textbf{Consequently, our experiments show that IBCL is able to maintain a constant training overhead per task, regardless of $n_\text{prefs}$ while achieving high performance. Although L2P also has this constant overhead, its performance is too poor to be acceptable.}

\edittmlr{
\subsubsection{Memory Overhead}
\label{subsubsec:memory}

\begin{figure}
    \centering
    \includegraphics[width=0.6\linewidth]{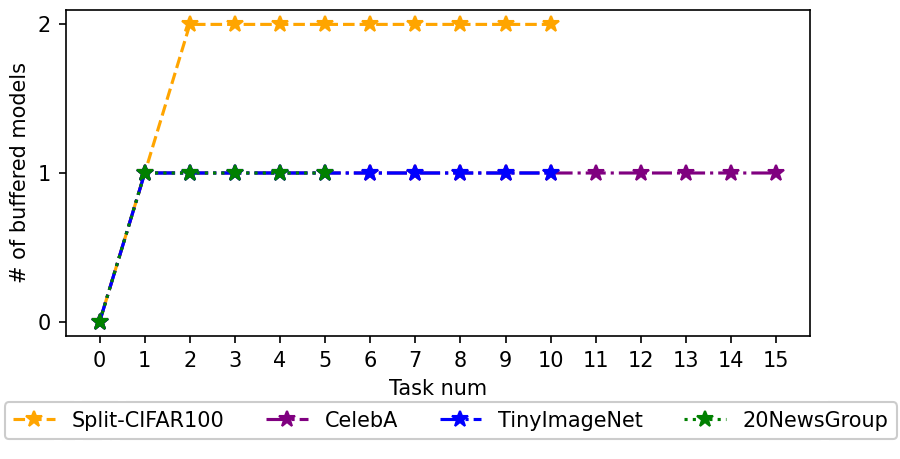}
    \caption{\edittmlr{Number of posteriors stored along tasks.}}
    \label{fig:memory_growth}
\end{figure}

Due to the discard threshold $d$, not all posteriors learned are cached. Figure \ref{fig:memory_growth} shows that Split CIFAR-100 eventually converges to using 2 posterior models, while the other tasks converge to using only 1 model. This result shows that IBCL is able to leverage a constant memory buffer to achieve high performance on continual learning tasks, given that the tasks are similar to each other.

We compare the overall memory cost for all methods. As described in Appendix \ref{app-experiments}, each fully-connected model has 3 layers, with dimensions 512, 64, 1. Therefore, one deterministic model has $512 \times 64 + 64 \times 1 = 32897$ parameters, and one Bayesian model of Gaussian distribution has $32897 \times 2 = 65794$ parameters, with one mean and one standard deviation for each weight. Each parameter is stored as a float16, costing 2 bytes. Therefore, training one deterministic model per task and doing convex combination costs $32897 \times 2 \text{ bytes} \times \# \text{ tasks}$ of memory overall. 

For rehearsal-based baselines GEM, A-GEM, DER, DER++ and \edittmlr{VCL + rehearsal}, no model needs to be cached at each task. Instead, what is being cached is a replay buffer of training data. In the experiments, we use 500 data points per buffer, the same size as the original papers \cite{lopez2017gradient,chaudhry2018efficient}. Each data point is an extracted feature of 512 $\times$ 2 bytes = 1024 bytes. Therefore, the overall memory overhead is \# data points per replay buffer $\times$ \# tasks $\times$ \# 1024 bytes. Although DER and DER++ cache additional information such as logits besides the replay buffer, the buffer itself is the dominant memory overhead.

For L2P, similar to the original paper \citep{wang2022learning}, we have a constant prompt pool of 20 prefixes per task, with each prefix of size $5 \times 512 = 2560$ bytes. The total memory overhead is 20 $\times$ 2560 bytes and does not grow with the number of tasks.

At last, IBCL costs $65794 \times 2 \text{ bytes} \times 2 \text{ models}$ for Split-CIFAR-100, and $65794 \times 2 \text{ bytes} \times 1 \text{ model}$ for the other benchmarks. Moreover, this memory cost converges, remaining constant and not increasing along tasks.

We organize the memory overheads in Table \ref{tab:mem}. Again, although L2P saves the most memory, the use of L2P on solving CLuST problem is still merely an attempt, and it does not perform well. Consequently, among all methods, IBCL is able to not only achieve high learning performance, but also save considerable memory cost.


\begin{table}[!htb]
\label{tab:mem}
\caption{\edittmlr{Memory overhead comparison.}}
\vskip 0.1in
\centering
\begin{tabular}{cccccc}
\toprule
            & \multicolumn{4}{c}{Memory cost overall (KB)}                                                   & Cost growing with\\
            & 20NewsGroup           & Split-CIFAR100        & CelebA                 & TinyImageNet          & \# of tasks?                                                \\
\midrule
Convex Comb & 328.97                & 657.94                & 986.91                 & 657.94                & Yes                                             \\
\midrule
GEM         & \multirow{5}{*}{2560} & \multirow{5}{*}{5120} & \multirow{5}{*}{7680} & \multirow{5}{*}{5120} & \multirow{5}{*}{Yes}                            \\
A-GEM       &                       &                       &                        &                       &                                                 \\
DER         &                       &                       &                        &                       &                                                 \\
DER++       &                       &                       &                        &                       &                                                 \\
\edittmlr{VCL + rehearsal}         &                       &                       &                        &                       &                                                 \\
\midrule
L2P         & 51.2                  & 51.2                  & 51.2                   & 51.2                  & No                                              \\
\midrule
IBCL (ours) & 131.59                & 263.18                & 131.59                 & 131.59                & No \\
\bottomrule
\end{tabular}
\end{table}
}

\subsection{Ablation Studies}
\label{subsec:abalation}

\edittmlr{The main experiments are conducted with $\alpha = 0.01$, $d = 0.012$, and prior distributions specified in Appendix \ref{app-experiments}. Here, we conduct ablation studies on these hyperparameters.}


\edittmlr{
\subsubsection{Different $d$'s}
\label{subsubsec:different-ds}

\begin{figure*}[!htb]
    \centering
    \includegraphics[width=\textwidth]{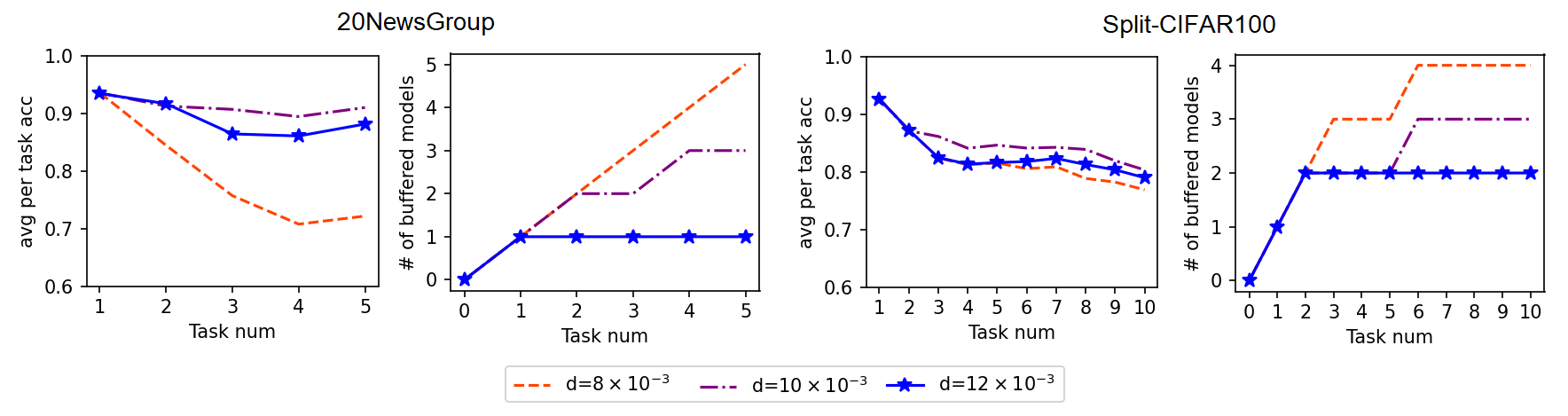}
    \caption{\edittmlr{Different $d$'s on 20 News Group and Split CIFAR100.}}
    \label{fig:d_ablation}
\end{figure*}

Here, we evaluate the effects of choosing different thresholds $d$. We experiment on 20 News Group and Split CIFAR100. The variations include
\begin{enumerate}
    \item $d = 12 \times 10^{-3}$, same as in the main experiments.
    \item $d = 10 \times 10^{-3}$.
    \item $d = 8 \times 10^{-3}$.
\end{enumerate}

As $d$ increases, we are allowing more posteriors in the knowledge base to be reused. This will lead to memory efficiency at the cost of a performance drop. Figure \ref{fig:d_ablation} shows that when we choose $d = 12 \times 10^{-3}$ as in the main experiments, the memory cache stops growing at a certain number of tasks (task 1 for 20 News Group and task 2 for Split-CIFAR100). The learning performance slightly increases when $d$ is shrunken to $10 \times 10^{-3}$, as more distributions participate in the final evaluation. However, the performance drops when $d = 8 \times 10^{-3}$. 

This observation can be explained using diversity in ensemble, as higher diversity implies improved ensemble performance \citep{kuncheva2003measures}. At $d = 8 \times 10^{-3}$, more models are included in the ensemble, but they are not diverse enough from the cached models. Therefore, the errors made by the cached models and the additional models are similar, lowering the overall diversity and hence the ensemble performance. At $d = 10 \times 10^{-3}$, very-similar models are excluded, so the kept models show sufficient diversity to balance out the errors. At $d = 12 \times 10^{-3}$, more models are excluded, and too few models are kept, so there is again not enough diversity.


Overall, choosing an appropraitely large $d$, such as $d = 12 \times 10^{-3}$, is able to not only achieve sufficient performance, but also constraint the total memory cost at a constant size. 


\subsubsection{Different $\alpha$'s}
\label{subsubsec:different-alphas}

\begin{figure*}[!htb]
    \centering
    \includegraphics[width=0.95\textwidth]{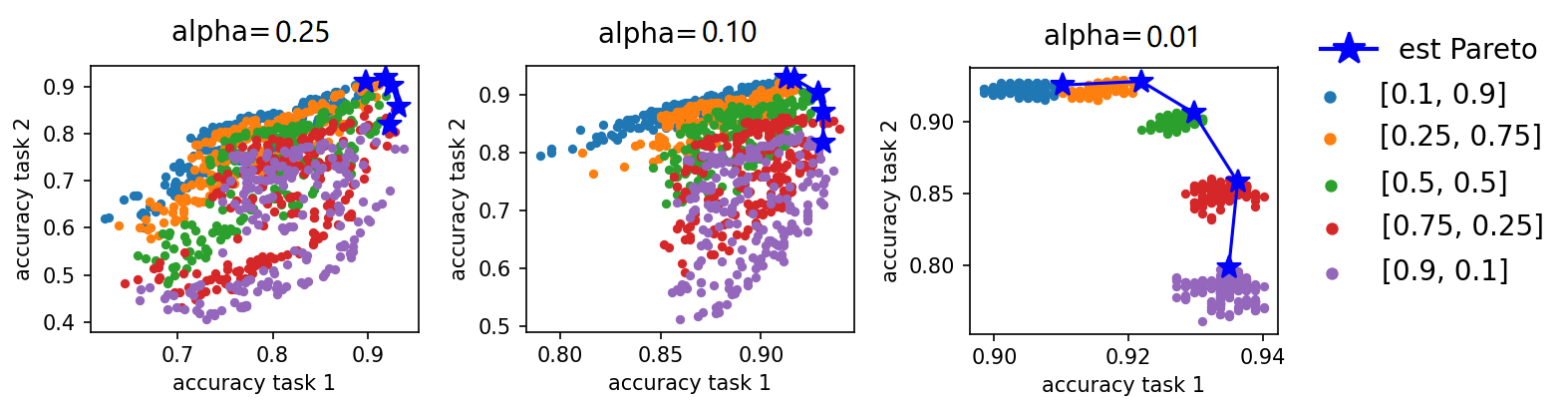}
    \caption{Different $\alpha$'s on different preferences over the first two tasks in 20 News Group.}
    \label{fig:20news_alpha_1}
\end{figure*}

\begin{figure*}[!htb]
    \centering
    \includegraphics[width=0.85\textwidth]{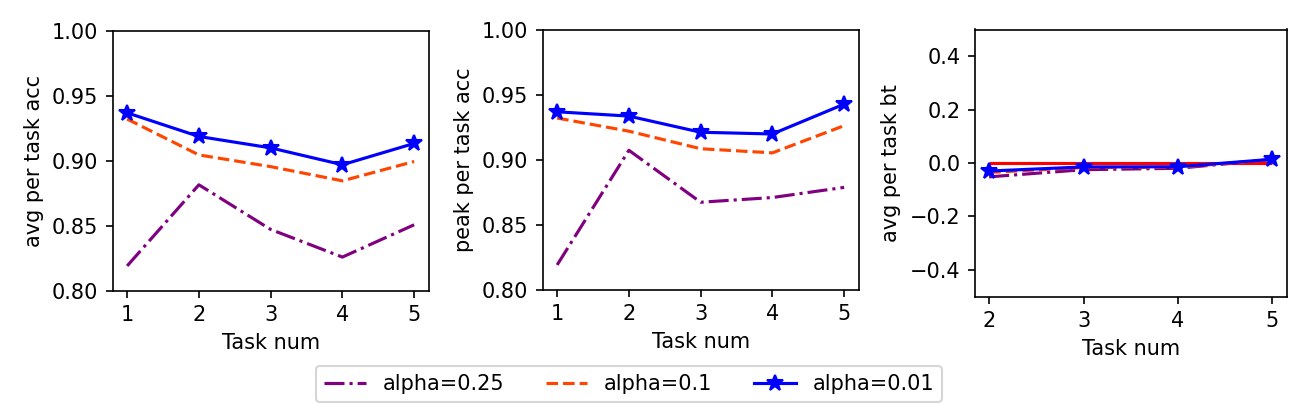}
    \caption{Different $\alpha$'s on randomly generated preferences over all tasks in 20 News Group.}
    \label{fig:20news_alpha_2}
\end{figure*}

Here, we evaluate the effects of choosing different significance level $\alpha$. We experiment on 20 News Group and Split CIFAR100. The variations include
\begin{enumerate}
    \item $\alpha = 0.01$, same as the main experiments.
    \item $\alpha = 0.1$.
    \item $\alpha = 0.25$.
\end{enumerate}

In Figure \ref{fig:20news_alpha_1}, we evaluate testing accuracy on three different $\alpha$'s over five different preferences (from $[0.1, 0.9]$ to $[0.9, 0.1]$) on the first two tasks of 20 News Group. For each preference, we uniformly sample 200 deterministic models from the HDR. We use the sampled model with the maximum L2 sum of the two accuracies to estimate the Pareto optimality under a preference. We can see that, as $\alpha$ approaches 0, we tend to sample closer to the Pareto front. This is because, with a smaller $\alpha$, HDRs become wider and we have a higher probability to sample Pareto-optimal models according to Proposition \ref{thm-hdr}. For instance, when $\alpha=0.01$, we have a probability of at least $0.99$ that the Pareto-optimal solution is contained in the HDR.
Figure \ref{fig:20news_alpha_2} shows that the performance drops as $\alpha$ increases, because we are more likely to sample poorly performing models from the HDR.

\subsubsection{Different Priors}
\label{subsubsec:different-priors}

\begin{figure*}[!htb]
    \centering
    \includegraphics[width=0.85\textwidth]{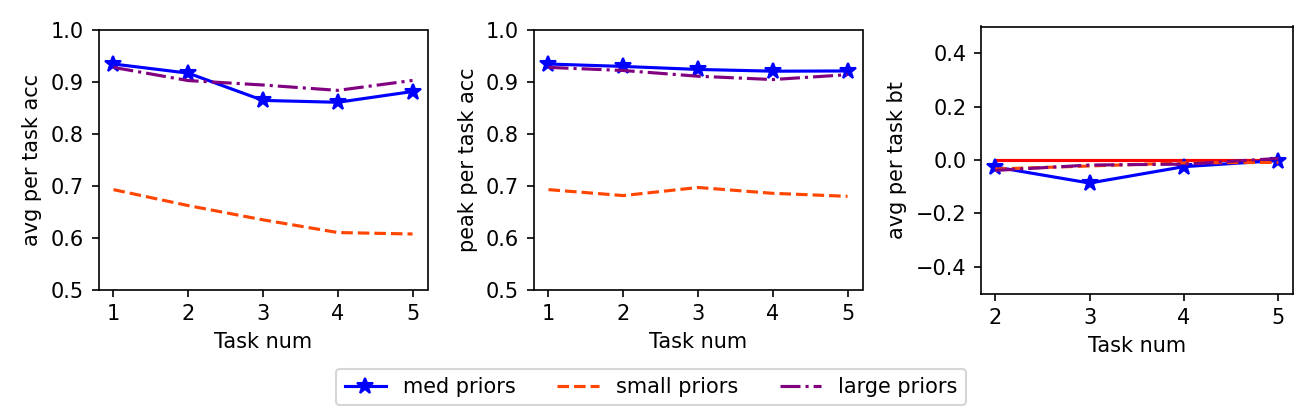}
    \caption{Different prior \edittmlr{standard deviation} 
    sizes on randomly generated preferences over all tasks in 20 News Group.}
    \label{fig:20news_prior_size}
\end{figure*}

\begin{figure*}[!htb]
    \centering
    \includegraphics[width=0.85\textwidth]{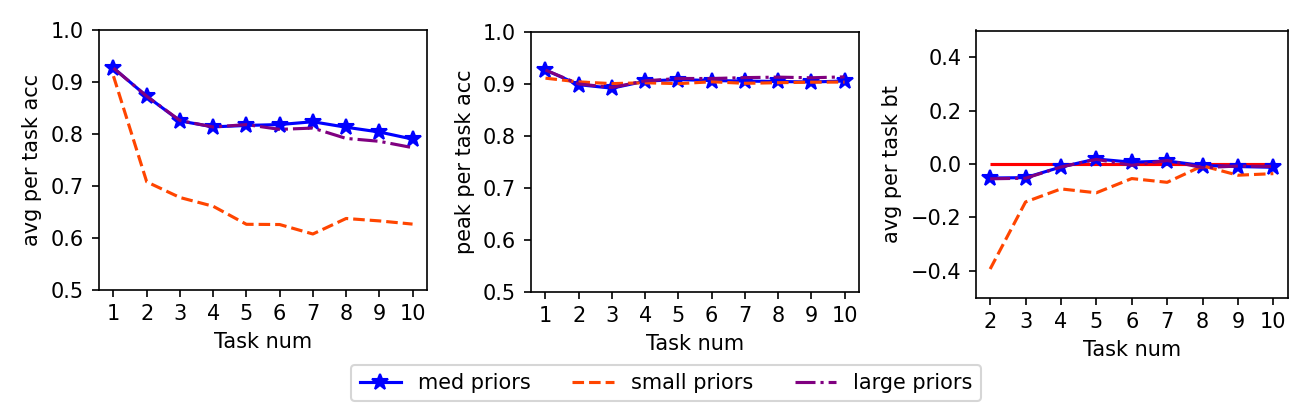}
    \caption{Different prior \edittmlr{standard deviation} 
    sizes on randomly generated preferences over all tasks in Split CIFAR100.}
    \label{fig:cifar100_prior_size}
\end{figure*}

\begin{figure*}[!htb]
    \centering
    \includegraphics[width=0.85\textwidth]{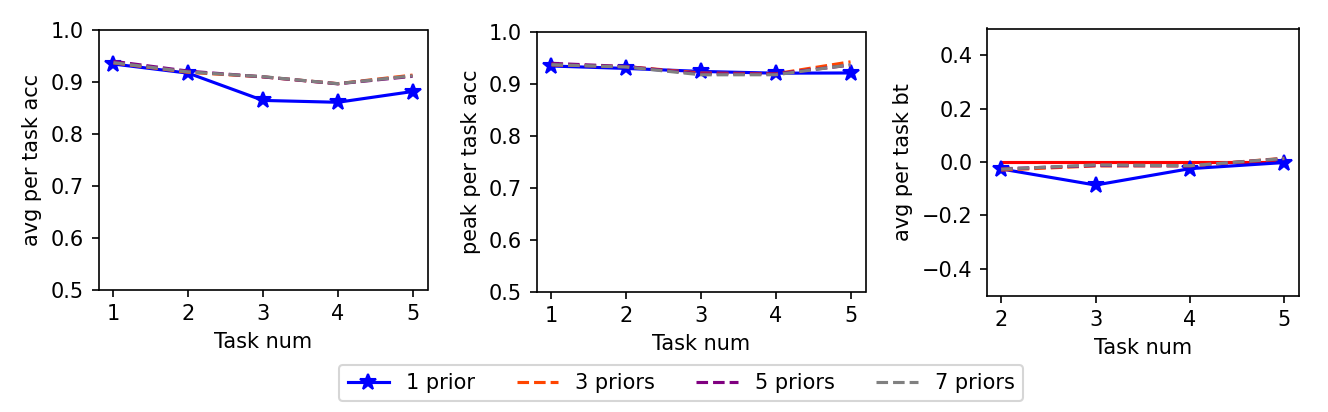}
    \caption{Different numbers of priors on randomly generated preferences over all tasks in 20 News Group.}
    \label{fig:20news_prior_num}
\end{figure*}

\begin{figure*}[!htb]
    \centering
    \includegraphics[width=0.85\textwidth]{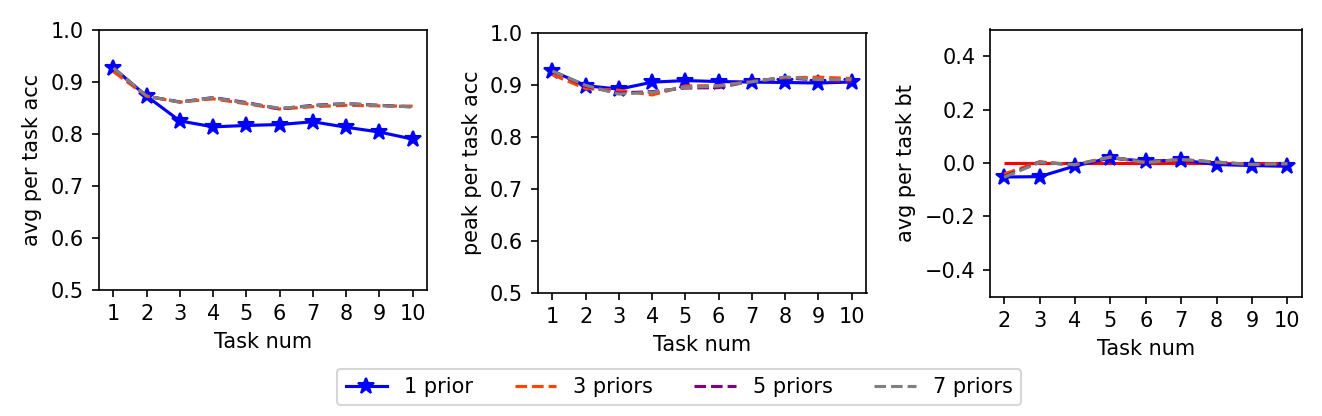}
    \caption{Different numbers of priors on randomly generated preferences over all tasks in Split CIFAR100.}
    \label{fig:cifar100_prior_num}
\end{figure*}

As stated in Appendix \ref{app-experiments}, the priors in our main experiments are decided by fine-tuning on validation sets. Here, we evaluate the effects of different priors. We experiment on 20 News Group and Split CIFAR100. First, we evaluate different sizes of prior standard deviations.
\begin{enumerate}
    \item Medium prior \edittmlr{standard deviations }= $\{2.5\}$, same as the main experiments.
    \item Small prior \edittmlr{standard deviations }= $\{0.25\}$.
    \item Large prior \edittmlr{standard deviations }= $\{25\}$.
\end{enumerate}

Figures \ref{fig:20news_prior_size} and \ref{fig:cifar100_prior_size} show the effects of different prior \edittmlr{standard deviation} sizes on the learning performance, on 20 News Group and Split CIFAR100, respectively. We can see that in both benchmarks, small prior \edittmlr{standard deviations} lower the average and peak per task accuracy. This is because the small \edittmlr{standard deviations} lead to less variations in model parameters, leading to lower generalization. However, larger \edittmlr{standard deviations} do not necessarily mean improved performance, as the large prior \edittmlr{standard deviations} perform similarly to the medium \edittmlr{standard deviations} in the main experiments. All backward transfers are near zero, meaning there is almost no forgetting.

We conclude that different choices of priors may lower the performance in the initial tasks. A sufficiently large prior \edittmlr{standard deviation} is needed to obtain necessary generalizability in models.

Next, we evaluate different numbers of priors. With more than one distribution per task, we balance the preference weights equally, formalized as equal $\beta$'s in Section \ref{subsec:zero_shot_adaptation}.
\begin{enumerate}
    \item 1 prior, \edittmlr{standard deviation} = $\{2.5\}$, same as the main experiments,
    \item 3 priors, \edittmlr{standard deviations} = $\{2, 2.5, 3\}$,
    \item 5 priors, \edittmlr{standard deviations} = $\{1.5, 2, 2.5, 3, 3.5\}$.
    \item 7 priors, \edittmlr{standard deviations} = $\{1, 1.5, 2, 2.5, 3, 3.5, 4\}$.
\end{enumerate}

As shown in Figures \ref{fig:20news_prior_num} and \ref{fig:cifar100_prior_num}, a higher number of priors could achieve higher performance. However, more priors mean that we need to cache more models at each task, costing more memory overhead. It is a design choice to balance the trade-off between performance and memory cost. In our main experiments, we choose to use only 1 distribution per task to minimize the memory cost while maintaining a sufficiently high accuracy.
}






\edittmlr{
\subsection{Additional Reinforcement Learning Experiments}
\label{subsec:exp_rl}

In addition to the main classification tasks, we also experiment IBCRL from Section \ref{subsec:ibcl_rl} on reinforcement learning benchmarks.

\textbf{Baselines.} We compare IBCRL to state-of-the-art multi-objective reinforcement learning (MORL) baselines.
\begin{enumerate}
    \item \textbf{PG-MORL} \citep{xu2020prediction}. This method uses multi-objective gradients to update policies. 
    \item \textbf{PD-MORL} \citep{basaklar2022pd}. This method learns a generalized policy by augmenting preferences into a Bellman equation.
    \item \textbf{Hyper-MORL} \citep{shu2024learning}. This method uses a hypernetwork to learn the mapping from preferences to policy parameters along side policy updates.
    \item \textbf{PSL-MORL} \citep{liu2025pareto}. This method also uses a hypernetwork, but only mapping from preferences to a subset of policy parameters, which are implemented as a model layer. 
\end{enumerate}

\textbf{Benchmarks.} Like the baselines, we use standard multi-objective Mujoco (MO-Mujoco) environments \citep{xu2020prediction}. We pick two of these environments to run our tests.
\begin{enumerate}
    \item MO-HalfCheetah-v2, which consists of two tasks: (1) lowering energy consumption by lowering torques and (2) improving forward speed in the $x$-direction.
    \item MO-Ant-v2, which also consists of the same two tasks.
\end{enumerate}

\textbf{Evaluation metrics.} Like the baselines, we use hypervolume (HV) of the learned Pareto set to evaluate the performance \citep{zitzler2003performance,shang2020survey}. This is the common metric used in evaluating multi-objective reinforcement learning algorithms. Specifically, a larger value indicates a solution closer to the actual Pareto front. Same as the baselines \citep{shu2024learning}, we use a reference point at $(0, 0)$ and compute the average HV using nine runs over 200 preferences.

\textbf{System.} The same system is used as in the classification experiments.

\begin{table}[htb!]
    \centering
    \label{tab:rl_results}
    \caption{Results of hypervolume (HV) $\times 10^{6}$ in reinforcement learning tasks.}
    \begin{tabular}{cccccc}
    \toprule
         &  PG-MORL & PD-MORL & Hyper-MORL & PSL-MORL & IBCRL (ours)\\
    \midrule
         MO-HalfCheetah &  $5.75$ & $\mathbf{5.98}$ & $5.53$ & $5.92$ & $5.78$\\
         MO-Ant &  $5.79$ & $7.05$ & $7.49$ & $\mathbf{8.63}$ & $7.67$\\
    \bottomrule
    \end{tabular}
\end{table}

The results of reinforcement learning tasks are shown in Table \ref{tab:rl_results}. We can see that IBCRL is able to maintain the same level of performance as the baselines.

However, the most important advantage of IBCRL is saving training computations. Like the baseline methods, we evaluate HV on 200 preferences evenly distributed between $(1, 0)$ and $(0, 1)$. The baseline methods need to train 200 policies to do such an evaluation, while we only need to train 2 policies, for $(1, 0)$ and $(0, 1)$, respectively, and then do zero-shot convex combination to obtain the remaining 198. That is, IBCRL significantly reduces the training cost to evaluate HV.

One remark is that how to adapt IBCL to reinforcement learning is still in an elementary phase, and IBCRL is only an attempt. Improving this adaptation shall be future work.
}

\section{Conclusion}
\label{sec:discussion}

We propose IBCL to tackle the CLuST problem, where models for an unbounded number of stability-plasticity trade-off preferences can be requested at each task.

\textbf{Advantages of IBCL.} The design of IBCL improves not only learning performance, but also efficiency when solving the CLuST problem, as state-of-the-art methods require retraining per preference, while IBCL only needs convex combinations. This benefit applies to various scales of models. It will be an interesting future direction to find a use case on large-scale models.

\textbf{Limitations of IBCL.} Poorly performing models can also be sampled from IBCL's HDRs. However, in practice, we can fine-tune $\alpha$ to reduce HDR to avoid poorly performing ones, as shown in ablation studies. In addition, one future research direction is to derive the preference vector $\bar{w}$ from some inputs. For example, we may learn it from an additional sequence of prompts \citep{wu2024personalized}. In that case, the preference vector itself might be different according to the design, including loss functions other than cross-entropy, which is currently used.

\edit{\textbf{Broader Impacts.} IBCL is potentially useful in deriving user-customized models from large multi-task models. These include large language models, recommendation systems, and other applications.}



\begin{thebibliography}{82}
\providecommand{\natexlab}[1]{#1}
\providecommand{\url}[1]{\texttt{#1}}
\expandafter\ifx\csname urlstyle\endcsname\relax
  \providecommand{\doi}[1]{doi: #1}\else
  \providecommand{\doi}{doi: \begingroup \urlstyle{rm}\Url}\fi

\bibitem[Abati et~al.(2020)Abati, Tomczak, Blankevoort, Calderara, Cucchiara, and Bejnordi]{Abati2020Conditional}
Davide Abati, Jakub Tomczak, Tijmen Blankevoort, Simone Calderara, Rita Cucchiara, and Babak~Ehteshami Bejnordi.
\newblock Conditional channel gated networks for task-aware continual learning.
\newblock In \emph{Proceedings of the IEEE Conference on Computer Vision and Pattern Recognition}, 2020.

\bibitem[Aizawa(2003)]{aizawa2003information}
Akiko Aizawa.
\newblock An information-theoretic perspective of tf--idf measures.
\newblock \emph{Information Processing \& Management}, 39\penalty0 (1):\penalty0 45--65, 2003.

\bibitem[Angelopoulos \& Bates(2021)Angelopoulos and Bates]{angelopoulos2021gentle}
Anastasios~N Angelopoulos and Stephen Bates.
\newblock A gentle introduction to conformal prediction and distribution-free uncertainty quantification.
\newblock \emph{arXiv preprint arXiv:2107.07511}, 2021.

\bibitem[Augustin et~al.(2014)Augustin, Coolen, de~Cooman, and Troffaes]{augustin2014introduction}
Thomas Augustin, Frank~P. Coolen, Gert de~Cooman, and Matthias C.~M. Troffaes (eds.).
\newblock \emph{Introduction to Imprecise Probabilities}.
\newblock John Wiley, Chichester, 2014.

\bibitem[Basaklar et~al.(2022)Basaklar, Gumussoy, and Ogras]{basaklar2022pd}
Toygun Basaklar, Suat Gumussoy, and Umit~Y Ogras.
\newblock Pd-morl: Preference-driven multi-objective reinforcement learning algorithm.
\newblock \emph{arXiv preprint arXiv:2208.07914}, 2022.

\bibitem[Billingsley(1986)]{billingsley}
Patrick Billingsley.
\newblock \emph{Probability and Measure}.
\newblock John Wiley and Sons, second edition, 1986.

\bibitem[Bulat et~al.(2020)Bulat, Kossaifi, Tzimiropoulos, and Pantic]{Bulat2020TF}
Adrian Bulat, Jean Kossaifi, Georgios Tzimiropoulos, and Maja Pantic.
\newblock Incremental multi-domain learning with network latent tensor factorization.
\newblock In \emph{Proceedings of the {AAAI} Conference on Artificial Intelligence}. {AAAI} Press, 2020.

\bibitem[Buzzega et~al.(2020)Buzzega, Boschini, Porrello, Abati, and Calderara]{buzzega2020dark}
Pietro Buzzega, Matteo Boschini, Angelo Porrello, Davide Abati, and Simone Calderara.
\newblock Dark experience for general continual learning: a strong, simple baseline.
\newblock \emph{Advances in neural information processing systems}, 33:\penalty0 15920--15930, 2020.

\bibitem[Caccia et~al.(2020)Caccia, Rodriguez, Ostapenko, Normandin, Lin, Page-Caccia, Laradji, Rish, Lacoste, V{\'a}zquez, et~al.]{caccia2020online}
Massimo Caccia, Pau Rodriguez, Oleksiy Ostapenko, Fabrice Normandin, Min Lin, Lucas Page-Caccia, Issam~Hadj Laradji, Irina Rish, Alexandre Lacoste, David V{\'a}zquez, et~al.
\newblock Online fast adaptation and knowledge accumulation (osaka): a new approach to continual learning.
\newblock \emph{Advances in Neural Information Processing Systems}, 33:\penalty0 16532--16545, 2020.

\bibitem[Caprio(2025)]{caprio2025optimaltransportepsiloncontaminatedcredal}
Michele Caprio.
\newblock Optimal transport for $\epsilon$-contaminated credal sets: To the memory of sayan mukherjee, 2025.
\newblock URL \url{https://arxiv.org/abs/2410.03267}.

\bibitem[Caprio \& Mukherjee(2023)Caprio and Mukherjee]{ergo_dipk}
Michele Caprio and Sayan Mukherjee.
\newblock Ergodic theorems for dynamic imprecise probability kinematics.
\newblock \emph{International Journal of Approximate Reasoning}, 152:\penalty0 325--343, 2023.

\bibitem[Caprio \& Seidenfeld(2023)Caprio and Seidenfeld]{constr}
Michele Caprio and Teddy Seidenfeld.
\newblock Constriction for sets of probabilities.
\newblock \emph{Proceedings of Machine Learning Research}, 215:\penalty0 84--95, 2023.

\bibitem[Caprio et~al.(2024)Caprio, Dutta, Jang, Lin, Ivanov, Sokolsky, and Lee]{ibnn}
Michele Caprio, Souradeep Dutta, Kuk~Jin Jang, Vivian Lin, Radoslav Ivanov, Oleg Sokolsky, and Insup Lee.
\newblock {Credal Bayesian Deep Learning}.
\newblock \emph{Transactions on Machine Learning Research}, 2024.

\bibitem[Caprio et~al.(2025)Caprio, Stutz, Li, and Doucet]{caprio2025conformalized}
Michele Caprio, David Stutz, Shuo Li, and Arnaud Doucet.
\newblock Conformalized credal regions for classification with ambiguous ground truth.
\newblock \emph{Transactions on Machine Learning Research}, 2025.
\newblock ISSN 2835-8856.
\newblock URL \url{https://openreview.net/forum?id=L7sQ8CW2FY}.

\bibitem[Caruana(1997)]{caruana1997multitask}
Rich Caruana.
\newblock Multitask learning.
\newblock \emph{Machine learning}, 28:\penalty0 41--75, 1997.

\bibitem[Chau et~al.(2025)Chau, Caprio, and Muandet]{chau2025integralimpreciseprobabilitymetrics}
Siu~Lun Chau, Michele Caprio, and Krikamol Muandet.
\newblock Integral imprecise probability metrics, 2025.
\newblock URL \url{https://arxiv.org/abs/2505.16156}.

\bibitem[Chaudhry et~al.(2018)Chaudhry, Ranzato, Rohrbach, and Elhoseiny]{chaudhry2018efficient}
Arslan Chaudhry, Marc'Aurelio Ranzato, Marcus Rohrbach, and Mohamed Elhoseiny.
\newblock Efficient lifelong learning with a-gem.
\newblock \emph{arXiv preprint arXiv:1812.00420}, 2018.

\bibitem[Chen et~al.(2023)Chen, Shui, Han, and Marchand]{chen2023stability}
Qi~Chen, Changjian Shui, Ligong Han, and Mario Marchand.
\newblock On the stability-plasticity dilemma in continual meta-learning: Theory and algorithm.
\newblock \emph{Advances in Neural Information Processing Systems}, 36:\penalty0 27414--27468, 2023.

\bibitem[Chen \& Liu(2016)Chen and Liu]{chen2016lifelong}
Z.~Chen and B.~Liu.
\newblock \emph{Lifelong Machine Learning}.
\newblock Synthesis Lectures on Artificial Intelligence and Machine Learning. Morgan \& Claypool Publishers, 2016.

\bibitem[Coolen(1992)]{coolen}
Frank P.~A. Coolen.
\newblock Imprecise highest density regions related to intervals of measures.
\newblock \emph{Memorandum {COSOR}}, 9254, 1992.

\bibitem[De~Lange et~al.(2021)De~Lange, Aljundi, Masana, Parisot, Jia, Leonardis, Slabaugh, and Tuytelaars]{de2021continual}
Matthias De~Lange, Rahaf Aljundi, Marc Masana, Sarah Parisot, Xu~Jia, Ale{\v{s}} Leonardis, Gregory Slabaugh, and Tinne Tuytelaars.
\newblock A continual learning survey: Defying forgetting in classification tasks.
\newblock \emph{IEEE transactions on pattern analysis and machine intelligence}, 44\penalty0 (7):\penalty0 3366--3385, 2021.

\bibitem[Deza \& Deza(2013)Deza and Deza]{encyc_dist}
Michel~Marie Deza and Elena Deza.
\newblock \emph{Encyclopedia of Distances}.
\newblock Springer Berlin, Heidelberg, 2nd edition, 2013.

\bibitem[D{\'\i}az-Rodr{\'\i}guez et~al.(2018)D{\'\i}az-Rodr{\'\i}guez, Lomonaco, Filliat, and Maltoni]{diaz2018don}
Natalia D{\'\i}az-Rodr{\'\i}guez, Vincenzo Lomonaco, David Filliat, and Davide Maltoni.
\newblock Don't forget, there is more than forgetting: new metrics for continual learning.
\newblock \emph{arXiv preprint arXiv:1810.13166}, 2018.

\bibitem[Dutta et~al.(2025)Dutta, Caprio, Lin, Cleaveland, Jang, Ruchkin, Sokolsky, and Lee]{inn}
Souradeep Dutta, Michele Caprio, Vivian Lin, Matthew Cleaveland, Kuk~Jin Jang, Ivan Ruchkin, Oleg Sokolsky, and Insup Lee.
\newblock Distributionally robust statistical verification with imprecise neural networks.
\newblock In \emph{Proceedings of the 28th ACM International Conference on Hybrid Systems: Computation and Control}, HSCC '25, New York, NY, USA, 2025. Association for Computing Machinery.
\newblock ISBN 9798400715044.
\newblock \doi{10.1145/3716863.3718040}.
\newblock URL \url{https://doi.org/10.1145/3716863.3718040}.

\bibitem[Ebrahimi et~al.(2019)Ebrahimi, Elhoseiny, Darrell, and Rohrbach]{ebrahimi2019uncertainty}
Sayna Ebrahimi, Mohamed Elhoseiny, Trevor Darrell, and Marcus Rohrbach.
\newblock Uncertainty-guided continual learning with bayesian neural networks.
\newblock \emph{arXiv preprint arXiv:1906.02425}, 2019.

\bibitem[Farajtabar et~al.(2020)Farajtabar, Azizan, Mott, and Li]{farajtabar2020orthogonal}
Mehrdad Farajtabar, Navid Azizan, Alex Mott, and Ang Li.
\newblock Orthogonal gradient descent for continual learning.
\newblock In \emph{Proceedings of the International Conference on Artificial Intelligence and Statistics}, 2020.

\bibitem[Farquhar \& Gal(2019)Farquhar and Gal]{farquhar2019unifying}
Sebastian Farquhar and Yarin Gal.
\newblock A unifying {B}ayesian view of continual learning.
\newblock \emph{arXiv preprint arXiv:1902.06494}, 2019.

\bibitem[Finn et~al.(2017)Finn, Abbeel, and Levine]{maml}
Chelsea Finn, Pieter Abbeel, and Sergey Levine.
\newblock {Model-Agnostic Meta-Learning} for fast adaptation of deep networks.
\newblock In Doina Precup and Yee~Whye Teh (eds.), \emph{Proceedings of the 34th International Conference on Machine Learning}, volume~70 of \emph{Proceedings of Machine Learning Research}, pp.\  1126--1135. PMLR, 2017.

\bibitem[Fuglede \& Topsoe(2004)Fuglede and Topsoe]{fuglede2004jensen}
Bent Fuglede and Flemming Topsoe.
\newblock Jensen-shannon divergence and hilbert space embedding.
\newblock In \emph{International symposium onInformation theory, 2004. ISIT 2004. Proceedings.}, pp.\ ~31. IEEE, 2004.

\bibitem[Ghavamzadeh \& Engel(2006)Ghavamzadeh and Engel]{ghavamzadeh2006bayesian}
Mohammad Ghavamzadeh and Yaakov Engel.
\newblock Bayesian policy gradient algorithms.
\newblock \emph{Advances in neural information processing systems}, 19, 2006.

\bibitem[Gupta et~al.(2021)Gupta, Singh, Bollapragada, and Lease]{gupta2021scalable}
Soumyajit Gupta, Gurpreet Singh, Raghu Bollapragada, and Matthew Lease.
\newblock Scalable unidirectional pareto optimality for multi-task learning with constraints.
\newblock \emph{arXiv preprint arXiv:2110.15442}, 2021.

\bibitem[He et~al.(2016)He, Zhang, Ren, and Sun]{he2016deep}
Kaiming He, Xiangyu Zhang, Shaoqing Ren, and Jian Sun.
\newblock Deep residual learning for image recognition.
\newblock In \emph{Proceedings of the IEEE conference on computer vision and pattern recognition}, pp.\  770--778, 2016.

\bibitem[Hyndman(1996)]{hyndman}
Rob~J. Hyndman.
\newblock Computing and graphing highest density regions.
\newblock \emph{The American Statistician}, 50\penalty0 (2):\penalty0 120--126, 1996.

\bibitem[Hüllermeier \& Waegeman(2021)Hüllermeier and Waegeman]{eyke}
Eyke Hüllermeier and Willem Waegeman.
\newblock Aleatoric and epistemic uncertainty in machine learning: an introduction to concepts and methods.
\newblock \emph{Machine Learning}, 3\penalty0 (110):\penalty0 457--506, 2021.

\bibitem[Juat et~al.(2022)Juat, Meredith, and Kruschke]{hdr_computation}
Ngumbang Juat, Mike Meredith, and John Kruschke.
\newblock Package `hdinterval’, 2022.
\newblock URL \url{https://cran.r-project.org/web/packages/HDInterval/HDInterval.pdf}.
\newblock Accessed on May 9, 2023.

\bibitem[Kaur et~al.(2023)Kaur, Ji, Dutta, Caprio, Yang, Bernardis, Sokolsky, and Lee]{ramneet}
Ramneet Kaur, Xiayan Ji, Souradeep Dutta, Michele Caprio, Yahan Yang, Elena Bernardis, Oleg Sokolsky, and Insup Lee.
\newblock Using semantic information for defining and detecting {OOD} inputs.
\newblock \emph{arXiv preprint arXiv:2302.11019}, 2023.

\bibitem[Kendall et~al.(2018)Kendall, Gal, and Cipolla]{kendall2018multi}
Alex Kendall, Yarin Gal, and Roberto Cipolla.
\newblock Multi-task learning using uncertainty to weigh losses for scene geometry and semantics.
\newblock In \emph{Proceedings of the IEEE conference on computer vision and pattern recognition}, pp.\  7482--7491, 2018.

\bibitem[Kessler et~al.(2023)Kessler, Cobb, Rudner, Zohren, and Roberts]{kessler}
Samuel Kessler, Adam Cobb, Tim G.~J. Rudner, Stefan Zohren, and Stephen~J. Roberts.
\newblock On sequential bayesian inference for continual learning.
\newblock \emph{arXiv preprint arXiv:2301.01828}, 2023.

\bibitem[Khetarpal et~al.(2022)Khetarpal, Riemer, Rish, and Precup]{khetarpal2022towards}
Khimya Khetarpal, Matthew Riemer, Irina Rish, and Doina Precup.
\newblock Towards continual reinforcement learning: A review and perspectives.
\newblock \emph{Journal of Artificial Intelligence Research}, 75:\penalty0 1401--1476, 2022.

\bibitem[Kim et~al.(2023)Kim, Noci, Orvieto, and Hofmann]{kim2023achieving}
Sanghwan Kim, Lorenzo Noci, Antonio Orvieto, and Thomas Hofmann.
\newblock Achieving a better stability-plasticity trade-off via auxiliary networks in continual learning.
\newblock \emph{arXiv preprint arXiv:2303.09483}, 2023.

\bibitem[Kirkpatrick et~al.(2017)Kirkpatrick, Pascanu, Rabinowitz, Veness, Desjardins, Rusu, Milan, Quan, Ramalho, Grabska-Barwinska, et~al.]{kirkpatrick2017overcoming}
James Kirkpatrick, Razvan Pascanu, Neil Rabinowitz, Joel Veness, Guillaume Desjardins, Andrei~A Rusu, Kieran Milan, John Quan, Tiago Ramalho, Agnieszka Grabska-Barwinska, et~al.
\newblock Overcoming catastrophic forgetting in neural networks.
\newblock \emph{Proceedings of the national academy of sciences}, 114\penalty0 (13):\penalty0 3521--3526, 2017.

\bibitem[Krizhevsky et~al.(2009)Krizhevsky, Hinton, et~al.]{krizhevsky2009learning}
Alex Krizhevsky, Geoffrey Hinton, et~al.
\newblock Learning multiple layers of features from tiny images.
\newblock \emph{Technical Report, University of Toronto}, 2009.

\bibitem[Kuncheva \& Whitaker(2003)Kuncheva and Whitaker]{kuncheva2003measures}
Ludmila~I Kuncheva and Christopher~J Whitaker.
\newblock Measures of diversity in classifier ensembles and their relationship with the ensemble accuracy.
\newblock \emph{Machine learning}, 51\penalty0 (2):\penalty0 181--207, 2003.

\bibitem[Lang(1995)]{lang1995newsweeder}
Ken Lang.
\newblock Newsweeder: Learning to filter netnews.
\newblock In \emph{Machine learning proceedings 1995}, pp.\  331--339. Elsevier, 1995.

\bibitem[Le \& Yang(2015)Le and Yang]{le2015tiny}
Ya~Le and Xuan Yang.
\newblock Tiny imagenet visual recognition challenge.
\newblock \emph{CS 231N}, 7\penalty0 (7):\penalty0 3, 2015.

\bibitem[Lee et~al.(2019)Lee, Stokes, and Eaton]{lee2019learning}
Seungwon Lee, James Stokes, and Eric Eaton.
\newblock Learning shared knowledge for deep lifelong learning using deconvolutional networks.
\newblock In \emph{IJCAI}, pp.\  2837--2844, 2019.

\bibitem[Li et~al.(2020)Li, Barnaghi, Enshaeifar, and Ganz]{li2020continual}
Honglin Li, Payam Barnaghi, Shirin Enshaeifar, and Frieder Ganz.
\newblock Continual learning using bayesian neural networks.
\newblock \emph{IEEE transactions on neural networks and learning systems}, 32\penalty0 (9):\penalty0 4243--4252, 2020.

\bibitem[Li(2017)]{li2017deep}
Yuxi Li.
\newblock Deep reinforcement learning: An overview.
\newblock \emph{arXiv preprint arXiv:1701.07274}, 2017.

\bibitem[Lin et~al.(2024)Lin, Jang, Dutta, Caprio, Sokolsky, and Lee]{dc4l}
Vivian Lin, Kuk~Jin Jang, Souradeep Dutta, Michele Caprio, Oleg Sokolsky, and Insup Lee.
\newblock {DC4L}: {D}istribution shift recovery via data-driven control for deep learning models.
\newblock In Alessandro Abate, Mark Cannon, Kostas Margellos, and Antonis Papachristodoulou (eds.), \emph{Proceedings of the 6th Annual Learning for Dynamics and Control Conference}, volume 242 of \emph{Proceedings of Machine Learning Research}, pp.\  1526--1538. PMLR, 15--17 Jul 2024.
\newblock URL \url{https://proceedings.mlr.press/v242/lin24b.html}.

\bibitem[Lin et~al.(2019)Lin, Zhen, Li, Zhang, and Kwong]{lin2019pareto}
Xi~Lin, Hui-Ling Zhen, Zhenhua Li, Qing-Fu Zhang, and Sam Kwong.
\newblock Pareto multi-task learning.
\newblock \emph{Advances in neural information processing systems}, 32, 2019.

\bibitem[Lin et~al.(2020)Lin, Yang, Zhang, and Kwong]{lin2020controllable}
Xi~Lin, Zhiyuan Yang, Qingfu Zhang, and Sam Kwong.
\newblock Controllable {P}areto multi-task learning.
\newblock \emph{arXiv preprint arXiv:2010.06313}, 2020.

\bibitem[Lin et~al.(2022)Lin, Yang, Zhang, and Zhang]{lin2022pareto}
Xi~Lin, Zhiyuan Yang, Xiaoyuan Zhang, and Qingfu Zhang.
\newblock Pareto set learning for expensive multi-objective optimization.
\newblock \emph{Advances in neural information processing systems}, 35:\penalty0 19231--19247, 2022.

\bibitem[Liu et~al.(2025)Liu, Wu, Huang, Gao, Wang, Xue, and Qian]{liu2025pareto}
Erlong Liu, Yu-Chang Wu, Xiaobin Huang, Chengrui Gao, Ren-Jian Wang, Ke~Xue, and Chao Qian.
\newblock Pareto set learning for multi-objective reinforcement learning.
\newblock In \emph{Proceedings of the AAAI Conference on Artificial Intelligence}, volume~39, pp.\  18789--18797, 2025.

\bibitem[Liu et~al.(2015)Liu, Luo, Wang, and Tang]{liu2015faceattributes}
Ziwei Liu, Ping Luo, Xiaogang Wang, and Xiaoou Tang.
\newblock Deep learning face attributes in the wild.
\newblock In \emph{Proceedings of International Conference on Computer Vision (ICCV)}, December 2015.

\bibitem[Lopez-Paz \& Ranzato(2017)Lopez-Paz and Ranzato]{lopez2017gradient}
David Lopez-Paz and Marc'Aurelio Ranzato.
\newblock Gradient episodic memory for continual learning.
\newblock \emph{Advances in neural information processing systems}, 30, 2017.

\bibitem[Lu et~al.(2025)Lu, Yuan, Feng, and Sun]{lu2025rethinking}
Aojun Lu, Hangjie Yuan, Tao Feng, and Yanan Sun.
\newblock Rethinking the stability-plasticity trade-off in continual learning from an architectural perspective.
\newblock \emph{arXiv preprint arXiv:2506.03951}, 2025.

\bibitem[Ma et~al.(2020)Ma, Du, and Matusik]{ma2020efficient}
Pingchuan Ma, Tao Du, and Wojciech Matusik.
\newblock Efficient continuous pareto exploration in multi-task learning.
\newblock In \emph{International Conference on Machine Learning}, pp.\  6522--6531. PMLR, 2020.

\bibitem[Mahapatra \& Rajan(2020)Mahapatra and Rajan]{mahapatra2020multi}
Debabrata Mahapatra and Vaibhav Rajan.
\newblock Multi-task learning with user preferences: Gradient descent with controlled ascent in pareto optimization.
\newblock In \emph{International Conference on Machine Learning}, pp.\  6597--6607. PMLR, 2020.

\bibitem[Mahapatra \& Rajan(2021)Mahapatra and Rajan]{mahapatra2021exact}
Debabrata Mahapatra and Vaibhav Rajan.
\newblock Exact pareto optimal search for multi-task learning: touring the pareto front.
\newblock \emph{arXiv preprint arXiv:2108.00597}, 2021.

\bibitem[Mahmoodi et~al.(2025)Mahmoodi, Moghadam, Hayat, Simon, and Harandi]{mahmoodi2025flashbacks}
Leila Mahmoodi, Peyman Moghadam, Munawar Hayat, Christian Simon, and Mehrtash Harandi.
\newblock Flashbacks to harmonize stability and plasticity in continual learning.
\newblock \emph{Neural Networks}, pp.\  107616, 2025.

\bibitem[Mermillod et~al.(2013)Mermillod, Bugaiska, and Bonin]{mermillod2013stability}
Martial Mermillod, Aur{\'e}lia Bugaiska, and Patrick Bonin.
\newblock The stability-plasticity dilemma: Investigating the continuum from catastrophic forgetting to age-limited learning effects, 2013.

\bibitem[Nguyen et~al.(2018)Nguyen, Li, Bui, and Turner]{variational}
Cuong~V. Nguyen, Yingzhen Li, Thang~D. Bui, and Richard~E. Turner.
\newblock Variational continual learning.
\newblock In \emph{International Conference on Learning Representations}, 2018.

\bibitem[Parisi et~al.(2019)Parisi, Kemker, Part, Kanan, and Wermter]{parisi2019continual}
German~I Parisi, Ronald Kemker, Jose~L Part, Christopher Kanan, and Stefan Wermter.
\newblock Continual lifelong learning with neural networks: A review.
\newblock \emph{Neural networks}, 113:\penalty0 54--71, 2019.

\bibitem[Raghavan \& Balaprakash(2021)Raghavan and Balaprakash]{raghavan2021formalizing}
Krishnan Raghavan and Prasanna Balaprakash.
\newblock Formalizing the generalization-forgetting trade-off in continual learning.
\newblock \emph{Advances in Neural Information Processing Systems}, 34:\penalty0 17284--17297, 2021.

\bibitem[Ruvolo \& Eaton(2013)Ruvolo and Eaton]{eaton2013lifelong}
P.~Ruvolo and E.~Eaton.
\newblock Active task selection for lifelong machine learning.
\newblock \emph{Proceedings of the AAAI Conference on Artificial Intelligence}, 27\penalty0 (1), 2013.

\bibitem[Sener \& Koltun(2018)Sener and Koltun]{sener2018multi}
Ozan Sener and Vladlen Koltun.
\newblock Multi-task learning as multi-objective optimization.
\newblock \emph{Advances in neural information processing systems}, 31, 2018.

\bibitem[Servia-Rodriguez et~al.(2021)Servia-Rodriguez, Mascolo, and Kwon]{servia2021knowing}
Sandra Servia-Rodriguez, Cecilia Mascolo, and Young~D Kwon.
\newblock Knowing when we do not know: Bayesian continual learning for sensing-based analysis tasks.
\newblock \emph{arXiv preprint arXiv:2106.05872}, 2021.

\bibitem[Shang et~al.(2020)Shang, Ishibuchi, He, and Pang]{shang2020survey}
Ke~Shang, Hisao Ishibuchi, Linjun He, and Lie~Meng Pang.
\newblock A survey on the hypervolume indicator in evolutionary multiobjective optimization.
\newblock \emph{IEEE Transactions on Evolutionary Computation}, 25\penalty0 (1):\penalty0 1--20, 2020.

\bibitem[Shi \& Wang(2023)Shi and Wang]{shi2023unified}
Haizhou Shi and Hao Wang.
\newblock A unified approach to domain incremental learning with memory: theory and algorithm.
\newblock In \emph{Proceedings of the 37th International Conference on Neural Information Processing Systems}, NIPS '23, Red Hook, NY, USA, 2023. Curran Associates Inc.

\bibitem[Shu et~al.(2024)Shu, Shang, Gong, Nan, and Ishibuchi]{shu2024learning}
Tianye Shu, Ke~Shang, Cheng Gong, Yang Nan, and Hisao Ishibuchi.
\newblock Learning pareto set for multi-objective continuous robot control.
\newblock \emph{arXiv preprint arXiv:2406.18924}, 2024.

\bibitem[Sloman et~al.(2025)Sloman, Caprio, and Kaski]{sloman2025epistemicerrorsimperfectmultitask}
Sabina~J. Sloman, Michele Caprio, and Samuel Kaski.
\newblock Epistemic errors of imperfect multitask learners when distributions shift, 2025.
\newblock URL \url{https://arxiv.org/abs/2505.23496}.

\bibitem[Thrun(1998)]{thrun1998lifelong}
Sebastian Thrun.
\newblock Lifelong learning algorithms.
\newblock \emph{Learning to learn}, 8:\penalty0 181--209, 1998.

\bibitem[Troffaes \& {de~Cooman}(2014)Troffaes and {de~Cooman}]{TroffaesDeCooman2014}
Matthias C.~M. Troffaes and Gert {de~Cooman}.
\newblock \emph{Lower Previsions}.
\newblock Wiley Series in Probability and Statistics. John Wiley \& Sons, Chichester, UK, 1 edition, 2014.
\newblock ISBN 978-0-470-72377-7.
\newblock \doi{10.1002/9781118762622}.
\newblock URL \url{https://doi.org/10.1002/9781118762622}.

\bibitem[Van~de Ven \& Tolias(2019)Van~de Ven and Tolias]{van2019three}
Gido~M. Van~de Ven and Andreas~S. Tolias.
\newblock Three scenarios for continual learning.
\newblock \emph{arXiv preprint arXiv:1904.07734}, 2019.

\bibitem[Walley(1991)]{walley}
Peter Walley.
\newblock \emph{{Statistical Reasoning with Imprecise Probabilities}}, volume~42 of \emph{Monographs on Statistics and Applied Probability}.
\newblock London : Chapman and Hall, 1991.

\bibitem[Wang et~al.(2024)Wang, Zhang, Su, and Zhu]{wang2024comprehensive}
Liyuan Wang, Xingxing Zhang, Hang Su, and Jun Zhu.
\newblock A comprehensive survey of continual learning: theory, method and application.
\newblock \emph{IEEE Transactions on Pattern Analysis and Machine Intelligence}, 2024.

\bibitem[Wang et~al.(2022)Wang, Zhang, Lee, Zhang, Sun, Ren, Su, Perot, Dy, and Pfister]{wang2022learning}
Zifeng Wang, Zizhao Zhang, Chen-Yu Lee, Han Zhang, Ruoxi Sun, Xiaoqi Ren, Guolong Su, Vincent Perot, Jennifer Dy, and Tomas Pfister.
\newblock Learning to prompt for continual learning.
\newblock In \emph{Proceedings of the IEEE/CVF Conference on Computer Vision and Pattern Recognition}, pp.\  139--149, 2022.

\bibitem[Wu et~al.(2024)Wu, Xie, Zhu, Zhuang, Zhang, Lin, and He]{wu2024personalized}
Yiqing Wu, Ruobing Xie, Yongchun Zhu, Fuzhen Zhuang, Xu~Zhang, Leyu Lin, and Qing He.
\newblock Personalized prompt for sequential recommendation.
\newblock \emph{IEEE Transactions on Knowledge and Data Engineering}, 2024.

\bibitem[Xu et~al.(2020)Xu, Tian, Ma, Rus, Sueda, and Matusik]{xu2020prediction}
Jie Xu, Yunsheng Tian, Pingchuan Ma, Daniela Rus, Shinjiro Sueda, and Wojciech Matusik.
\newblock Prediction-guided multi-objective reinforcement learning for continuous robot control.
\newblock In \emph{International conference on machine learning}, pp.\  10607--10616. PMLR, 2020.

\bibitem[Yoon et~al.(2018)Yoon, Kim, Dia, Kim, Bengio, and Ahn]{bmaml}
Jaesik Yoon, Taesup Kim, Ousmane Dia, Sungwoong Kim, Yoshua Bengio, and Sungjin Ahn.
\newblock {Bayesian Model-Agnostic Meta-Learning}.
\newblock In S.~Bengio, H.~Wallach, H.~Larochelle, K.~Grauman, N.~Cesa-Bianchi, and R.~Garnett (eds.), \emph{Advances in Neural Information Processing Systems}, volume~31. Curran Associates, Inc., 2018.

\bibitem[Zenke et~al.(2017)Zenke, Poole, and Ganguli]{zenke2017continual}
Friedemann Zenke, Ben Poole, and Surya Ganguli.
\newblock Continual learning through synaptic intelligence.
\newblock In \emph{International conference on machine learning}, pp.\  3987--3995. PMLR, 2017.

\bibitem[Zitzler et~al.(2003)Zitzler, Thiele, Laumanns, Fonseca, and Da~Fonseca]{zitzler2003performance}
Eckart Zitzler, Lothar Thiele, Marco Laumanns, Carlos~M Fonseca, and Viviane~Grunert Da~Fonseca.
\newblock Performance assessment of multiobjective optimizers: An analysis and review.
\newblock \emph{IEEE Transactions on evolutionary computation}, 7\penalty0 (2):\penalty0 117--132, 2003.

\end{thebibliography}

\newpage
\begin{appendices}
\section{Reason to adopt a Bayesian continual learning approach}\label{why_bcl}
Let $q_0 (\theta)$ be our prior probability density / mass function (pdf / pmf) on the parameter $\theta\in\Theta$ at time $t=0$. At time $t=1$, we collect data $(\bar{x}_1,\bar{y}_1)$ related to task $1$, we elicit likelihood pdf/pmf $\ell_1(\bar{x}_1,\bar{y}_1 \mid \theta)$, and we compute $q_1(\theta \mid \bar{x}_1,\bar{y}_1) \propto q_0(\theta) \times \ell_1(\bar{x}_1,\bar{y}_1 \mid \theta)$. At time $t=2$, we collect data $(\bar{x}_2,\bar{y}_2)$ related to task $2$ and we elicit likelihood pdf/pmf $\ell_2(\bar{x}_2,\bar{y}_2 \mid \theta)$. Now we have two options.

\begin{itemize}
    \item[(i)] Bayesian Continual Learning (BCL): we let the prior pdf/pmf at time $t=2$ be the posterior pdf/pmf at time $t=1$. That is, our prior pdf/pmf is $q_1(\theta \mid \bar{x}_1,\bar{y}_1)$, and we compute $q_2(\theta \mid \bar{x}_1,\bar{y}_1,\bar{x}_2,\bar{y}_2) \propto q_1(\theta \mid \bar{x}_1,\bar{y}_1) \times \ell_2(\bar{x}_2,\bar{y}_2 \mid \theta) \propto q_0(\theta) \times \ell_1(\bar{x}_1,\bar{y}_1 \mid \theta) \times \ell_2(\bar{x}_2,\bar{y}_2 \mid \theta)$;\footnote{Here we tacitly assume that the likelihoods are independent.}
    \item[(ii)] Bayesian Isolated Learning (BIL): we let the prior pdf/pmf at time $t=2$ be a generic prior pdf/pmf $q_0^\prime(\theta)$. We compute $q_2^\prime(\theta\mid \bar{x}_2,\bar{y}_2) \propto q_0^\prime(\theta) \times \ell_2(\bar{x}_2,\bar{y}_2 \mid \theta)$. We can even re-use the original prior, so that $q_0^\prime=q_0$.
\end{itemize}

As we can see, in option (i) we assume that the data generating process at time $t=2$ takes into account both tasks, while in option (ii) we posit that it only takes into account task $2$. Denote by $\sigma(X)$ the sigma-algebra generated by a generic random variable $X$. Let also $Q_2$ be the probability measure whose pdf/pmf is $q_2$, and $Q_2^\prime$ be the probability measure whose pdf/pmf is $q_2^\prime$. Then, we have the following.
\begin{proposition}\label{prop-1}
Posterior probability measure $Q_2$ can be written as a $\sigma(\bar{X}_1,\bar{Y}_1,\bar{X}_2,\bar{Y}_2)$-measurable random variable taking values in $[0,1]$, while posterior probability measure $Q_2^\prime$ can be written as a $\sigma(\bar{X}_2,\bar{Y}_2)$-measurable random variable taking values in $[0,1]$.
\end{proposition}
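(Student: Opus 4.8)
The plan is to reduce the statement to a measurability check on the explicit Bayesian update formulas displayed just above the proposition, and then to invoke the Doob--Dynkin lemma. I read ``$Q_2$ can be written as a $\sigma(\bar X_1,\bar Y_1,\bar X_2,\bar Y_2)$-measurable random variable taking values in $[0,1]$'' as the assertion that, for every fixed event $A\in\mathcal{A}_\Theta$, the evaluation $Q_2(A)$ is a $\sigma(\bar X_1,\bar Y_1,\bar X_2,\bar Y_2)$-measurable map into $[0,1]$ (equivalently, $Q_2$ is a measurable random probability measure whose generating data is $(\bar X_1,\bar Y_1,\bar X_2,\bar Y_2)$, once $\Delta_\Theta$ carries the $\sigma$-algebra generated by the evaluation maps). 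Because each $Q_2(A)$ is a probability it automatically lies in $[0,1]$, so the entire content is the identification of the conditioning $\sigma$-algebra.

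First I would fix $A\in\mathcal{A}_\Theta$ and write $Q_2(A)$ out from the BCL update as $Q_2(A)=\frac{\int_A q_0(\theta)\,l_1(\bar x_1,\bar y_1\mid\theta)\,l_2(\bar x_2,\bar y_2\mid\theta)\,\mathrm{d}\theta}{\int_\Theta q_0(\theta)\,l_1(\bar x_1,\bar y_1\mid\theta)\,l_2(\bar x_2,\bar y_2\mid\theta)\,\mathrm{d}\theta}$. By Doob--Dynkin it suffices to exhibit this as a measurable function of the realized data $(\bar x_1,\bar y_1,\bar x_2,\bar y_2)$ alone. The prior $q_0$ is a fixed density, and under the standing assumption that each likelihood $(\theta,\bar x,\bar y)\mapsto l_i(\bar x,\bar y\mid\theta)$ is jointly measurable, the integrand $\theta\mapsto q_0(\theta)\,l_1\,l_2$ is jointly measurable in the parameter and the data.

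The hard part, such as it is, is the passage from joint measurability of the integrand to measurability of the two integrals \emph{as functions of the data}: this is precisely the standard ``measurability of a parameter-dependent integral'' fact, which follows from Tonelli's theorem for the nonnegative integrands at hand (or, more generally, from a monotone-class / measurable-section argument). Granting it, both numerator and denominator are $\sigma(\bar X_1,\bar Y_1,\bar X_2,\bar Y_2)$-measurable; on the almost-sure event where the denominator is strictly positive the ratio is well defined and measurable, and one fixes an arbitrary value on the null complement. Hence $Q_2(A)$ is $\sigma(\bar X_1,\bar Y_1,\bar X_2,\bar Y_2)$-measurable, as claimed.

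Finally, for $Q_2^\prime$ I would run the identical argument on the BIL update $Q_2^\prime(A)=\frac{\int_A q_0^\prime(\theta)\,l_2(\bar x_2,\bar y_2\mid\theta)\,\mathrm{d}\theta}{\int_\Theta q_0^\prime(\theta)\,l_2(\bar x_2,\bar y_2\mid\theta)\,\mathrm{d}\theta}$. The crucial structural difference is that the factor $l_1(\bar x_1,\bar y_1\mid\theta)$ is absent, so the integrand, and therefore the whole ratio, depends on the data only through $(\bar x_2,\bar y_2)$; the same measurability-of-integrals lemma then yields that $Q_2^\prime(A)$ is $\sigma(\bar X_2,\bar Y_2)$-measurable. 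This contrast is exactly the point of the proposition: BCL's posterior retains information about task $1$ through $l_1$, whereas BIL's posterior discards it.
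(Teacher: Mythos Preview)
Your argument is correct, but it proceeds along a different route from the paper's. The paper simply identifies $Q_2(A)$ with the conditional expectation $\mathbb{E}[\mathbbm{1}_A \mid \sigma(\bar X_1,\bar Y_1,\bar X_2,\bar Y_2)]$ (and likewise $Q_2'(A)$ with $\mathbb{E}[\mathbbm{1}_A \mid \sigma(\bar X_2,\bar Y_2)]$), and then appeals to the Kolmogorov definition of conditional expectation, which by construction yields a $\mathcal{G}$-measurable random variable for any sub-$\sigma$-algebra $\mathcal{G}$; the $[0,1]$ range is immediate from $0\le\mathbbm{1}_A\le 1$. By contrast, you work bottom-up from the explicit Bayes ratio, check joint measurability of the integrand, pass through Tonelli to get measurability of numerator and denominator in the data, and invoke Doob--Dynkin. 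The paper's approach is quicker and more abstract, requiring no regularity on the likelihoods beyond what is needed for the conditional expectation to exist; your approach is more constructive, makes explicit the mild joint-measurability hypothesis on the $l_i$, and has the expository virtue of isolating exactly where the dependence on $(\bar x_1,\bar y_1)$ enters (namely through the factor $l_1$), which is the substantive point feeding into the corollary on catastrophic forgetting.
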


\begin{proof}
Pick any $A\subset\Theta$. Then, $Q_2[A \mid \sigma(\bar{X}_1,\bar{Y}_1,\bar{X}_2,\bar{Y}_2)]=\mathbb{E}_{Q_2}[\mathbbm{1}_A \mid \sigma(\bar{X}_1,\bar{Y}_1,\bar{X}_2,\bar{Y}_2)]$, a $\sigma(\bar{X}_1,\bar{Y}_1,\bar{X}_2,\bar{Y}_2)$-measurable random variable taking values in $[0,1]$. Notice that $\mathbbm{1}_A$ denotes the indicator function for set $A$. Similarly, $Q_2^\prime[A \mid \sigma(\bar{X}_2,\bar{Y}_2)]=\mathbb{E}_{Q_2^\prime}[\mathbbm{1}_A \mid \sigma(\bar{X}_2,\bar{Y}_2)]$, a $\sigma(\bar{X}_2,\bar{Y}_2)$-measurable random variable taking values in $[0,1]$. This is a well-known result in measure theory \citep{billingsley}.
\end{proof}

Of course Proposition \ref{prop-1} holds for all $t \geq 2$. Recall that the sigma-algebra $\sigma(X)$ generated by a generic random variable $X$ captures the idea of information encoded in observing $X$. An immediate corollary is the following.

\begin{corollary}\label{cor-1}
Let $t \geq 2$. Then, if we opt for BIL, we lose all the information encoded in $\{(\bar{X}_i,\bar{Y}_i)\}_{i=1}^{t-1}$.
\end{corollary}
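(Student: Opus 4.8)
The plan is to derive the corollary directly from Proposition \ref{prop-1} together with the standard measure-theoretic reading of a sigma-algebra as the information content of its generating random variables. The extension of Proposition \ref{prop-1} to arbitrary $t \geq 2$ is immediate: repeating the argument verbatim, the BCL posterior $Q_t$ is $\sigma(\bar{X}_1,\bar{Y}_1,\ldots,\bar{X}_t,\bar{Y}_t)$-measurable, whereas the BIL posterior $Q_t^\prime$, being assembled only from the generic prior $q_0^\prime$ and the likelihood $l_t(\bar{x}_t,\bar{y}_t \mid \theta)$, is $\sigma(\bar{X}_t,\bar{Y}_t)$-measurable. So I would begin by stating this generalization and fixing notation, writing $\mathcal{H}_{t-1} := \sigma(\{(\bar{X}_i,\bar{Y}_i)\}_{i=1}^{t-1})$ for the sigma-algebra encoding everything observed on tasks $1,\ldots,t-1$.

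The claim to establish is that the BIL posterior retains none of $\mathcal{H}_{t-1}$. A first, elementary reading follows at once from measurability: since $Q_t^\prime$ is $\sigma(\bar{X}_t,\bar{Y}_t)$-measurable by construction, it is a function of $(\bar{X}_t,\bar{Y}_t)$ alone and does not functionally depend on $\{(\bar{X}_i,\bar{Y}_i)\}_{i=1}^{t-1}$. To upgrade this to the stronger, probabilistic sense of \emph{losing all information}, I would invoke the independence of the per-task likelihoods assumed in the footnote to option (i). Under that assumption the datasets $(\bar{X}_i,\bar{Y}_i)$ are mutually independent across tasks, so $\sigma(\bar{X}_t,\bar{Y}_t)$ is independent of $\mathcal{H}_{t-1}$. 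Because $Q_t^\prime$ is $\sigma(\bar{X}_t,\bar{Y}_t)$-measurable, it is therefore independent of $\mathcal{H}_{t-1}$, and hence for every bounded measurable $f$,
\begin{equation}
\mathbb{E}[\, f(Q_t^\prime) \mid \mathcal{H}_{t-1} \,] = \mathbb{E}[\, f(Q_t^\prime) \,],
\end{equation}
so conditioning on the first $t-1$ tasks leaves the BIL posterior unchanged in distribution. This is the precise sense in which the information in $\{(\bar{X}_i,\bar{Y}_i)\}_{i=1}^{t-1}$ is lost. For contrast I would remark that $Q_t$ is measurable with respect to $\mathcal{H}_{t-1}$ jointly with $\sigma(\bar{X}_t,\bar{Y}_t)$, and therefore genuinely encodes that information, which is exactly the motivation for preferring BCL.

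The step I expect to be the main obstacle is pinning down a defensible formalization of ``losing all the information,'' since the corollary is stated informally and measurability alone only yields the weak, functional-dependence reading. The cleanest route to the strong reading is the independence argument above, but it quietly leans on the footnote's independence assumption; I would flag explicitly that it is this assumption, rather than mere measurability, that promotes ``$Q_t^\prime$ does not depend on the earlier data'' to ``the earlier data are probabilistically irrelevant to $Q_t^\prime$.'' Everything else is a direct application of Proposition \ref{prop-1} and the elementary fact that a $\mathcal{G}$-measurable random variable is independent of any sigma-algebra independent of $\mathcal{G}$.
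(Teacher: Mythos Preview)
Your proposal is correct and, at its core, matches the paper's own argument: the paper treats the corollary as immediate from Proposition~\ref{prop-1} together with the standard reading of $\sigma(X)$ as ``the information encoded in observing $X$,'' and simply remarks that Proposition~\ref{prop-1} extends to all $t\geq 2$. Your ``elementary reading'' via measurability is exactly this.

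Where you go further is in the second half: the paper never invokes the footnoted independence assumption to upgrade functional non-dependence to probabilistic irrelevance, nor does it write down a conditional-expectation identity like your $\mathbb{E}[f(Q_t')\mid\mathcal{H}_{t-1}]=\mathbb{E}[f(Q_t')]$. That additional argument is sound and sharpens what ``losing information'' means, but it is your own contribution rather than something the paper supplies; the paper is content with the informal sigma-algebra interpretation and moves on. So your proposal is a strict superset of the paper's reasoning, with the independence step being extra rigor rather than a different route.
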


In turn, if we opt for BIL, we obtain a posterior that is not measurable with respect to $\sigma(\{(\bar{X}_i,\bar{Y}_i)\}_{i=1}^{t})\setminus\sigma(\bar{X}_t,\bar{Y}_t)$. If the true data generating process $p_t$ is a function of the previous data generating processes $p_{t^\prime}$, $t^\prime \leq t$, this leaves us with a worse approximation of the ``true'' posterior $Q^{\text{true}} \propto Q_0 \times p_t$.

The phenomenon in Corollary \ref{cor-1} is commonly referred to as \textit{catastrophic forgetting}. Continual learning literature is unanimous in labeling catastrophic forgetting as undesirable -- see e.g. \citep{farquhar2019unifying,li2020continual}. For this reason, in this work we adopt a BCL approach. In practice, we cannot compute the posterior pdf/pmf exactly, and we will resort to variational inference to approximate them -- an approach often referred to as Variational Continual Learning (VCL) \citep{variational}. 
As shown in Section \ref{subsec:details_of_assumption},
Assumption \ref{assum:task_similarity} is needed in VCL to avoid catastrophic forgetting.

\subsection{Relationship between IBCL and other BCL techniques}
Like \citep{farquhar2019unifying,li2020continual}, the weights in our Bayesian neural networks (BNNs) have Gaussian distribution with diagonal covariance matrix. 
Because IBCL is rooted in Bayesian continual learning, we can initialize IBCL with a much smaller number of parameters to solve a complex task as long as it can solve a set of simpler tasks. In addition, IBCL does not need to evaluate the importance of parameters by measures such as computing the Fisher information, which are computationally expensive and intractable in large models.

\subsubsection{Relationship between IBCL and MAML}\label{relat_ibcl_maml}
In this section, we discuss the relationship between IBCL and the Model-Agnostic Meta-Learning (MAML) and Bayesian MAML (BMAML) procedures introduced in \citep{maml,bmaml}, respectively. These are inherently different than IBCL, since  the latter is a continual learning procedure, while MAML and BMAML are meta-learning algorithms. Nevertheless, given the popularity of these procedures, we feel that relating IBCL to them would be useful to draw some insights on IBCL itself.

In MAML and BMAML, a task $i$ is specified by a $n_i$-shot dataset $D_i$ that consists of a small number of training examples, e.g. observations $(x_{1_i},y_{1_i}),\ldots,(x_{n_i},y_{n_i})$. Tasks are sampled from a task distribution $\mathbb{T}$ such that the sampled tasks share the statistical regularity of the task distribution. In IBCL, Assumption \ref{assum:task_similarity} guarantees that the tasks $p_i$ share the statistical regularity of class $\mathcal{F}$. MAML and BMAML leverage this regularity to improve the learning efficiency of subsequent tasks. 

At each meta-iteration $i$,
\begin{enumerate}
    \item \textit{Task-Sampling}: For both MAML and BMAML, a mini-batch $T_i$ of tasks is sampled from the task distribution $\mathbb{T}$. Each task $\tau_i \in T_i$ provides task-train and task-validation data, $D^\text{trn}_{\tau_i}$ and $D^\text{val}_{\tau_i}$, respectively.
    \item \textit{Inner-Update}: For MAML,  the parameter of each task $\tau_i \in T_i$ is updated starting from the current generic initial parameter $\theta_0$, and then performing $n_i$ gradient descent steps on the task-train loss. For BMAML, the posterior $q(\theta_{\tau_i} \mid D^\text{trn}_{\tau_i},\theta_0)$ is computed, for all $\tau_i \in T_i$.
    \item \textit{Outer-Update}: For MAML, the generic initial parameter $\theta_0$ is updated by gradient descent. For BMAML, it is updated using the Chaser loss \cite[Equation (7)]{bmaml}.
\end{enumerate}

Notice how in our work $\bar{w}$ is a probability vector. This implies that if we fix a number of task $k$ and we let $\bar{w}$ be equal to $(w_1,\ldots,w_k)^\top$, then $\bar{w} \cdot \bar{p}$ can be seen as a sample from $\mathbb{T}$ such that $\mathbb{T}(p_i)=w_i$, for all $i\in\{1,\ldots,k\}$. 

Here lies the main difference between IBCL and BMAML. 
In the latter the information provided by the tasks is used to obtain a refinement of the (parameter of the) distribution $\mathbb{T}$ on the tasks themselves. In IBCL, instead, we are interested in the optimal parameterization of the posterior distribution associated with $\bar{w} \cdot \bar{p}$. Notice also that at time $k+1$, in IBCL the support of $\mathbb{T}$ changes: it is $\{p_1,\ldots,p_{k+1}\}$, while for MAML and BMAML it stays the same.

Also, MAML and BMAML can be seen as ensemble methods, since they use different values (MAML) or different distributions (BMAML) to perform the Outer-Update and come up with a single value (MAML) or a single distributions (BMAML). Instead, IBCL keeps distributions separate via FGCS, thus capturing the ambiguity faced by the designer during the analysis. 

Furthermore, we want to point out how while for BMAML the tasks $\tau_i$ are all ``candidates'' for the true data generating process (dgp) $p_i$, in IBCL we approximate the pdf/pmf of $p_i$ with the product $\prod_{h=1}^i \ell_h$ of the likelihoods up to task $i$. The idea of different candidates for the true dgp is beneficial for IBCL as well: in the future, we plan to let go of Assumption \ref{assum:task_similarity} and let each $p_i$ belong to a credal set $\mathcal{P}_i$. This would capture the epistemic uncertainty faced by the agent on the true dgp.

To summarize, IBCL is a continual learning technique whose aim is to find the correct parameterization of the posterior associated with  $\bar{w} \cdot \bar{p}$. Here, $\bar{w}$ expresses the developer's preferences on the tasks. MAML and BMAML, instead, are meta-learning algorithms whose main concern is to refine the distribution $\mathbb{T}$ from which the tasks are sampled. While IBCL is able to capture the preferences of, and the ambiguity faced by, the designer, MAML and BMAML are unable to do so. On the contrary, these latter seem better suited to solve meta-learning problems. An interesting future research direction is to come up with imprecise BMAML, or IBMAML, where a credal set $\text{Conv}(\{\mathbb{T}_1,\ldots,\mathbb{T}_k\})$ is used to capture the ambiguity faced by the developer in specifying the correct distribution on the possible tasks. The process of selecting one element from such credal set may lead to computational gains.

\newpage
\section{Proofs of the Propositions}\label{app-proofs}
\begin{proof}[Proof of Proposition \ref{thm:hat_q_selection}]

Without loss of generality, suppose we have encountered $i=2$ tasks so far, so the FGCS is $\mathcal{Q}_2$. Let $\text{ex}[\mathcal{Q}_1]=\{q_1^j\}_{j=1}^{m_1}$ and $\text{ex}[\mathcal{Q}_2]\setminus\text{ex}[\mathcal{Q}_1] =\{q_2^j\}_{j=1}^{m_2}$. 
Let $\hat{q}$ be any element of $\mathcal{Q}_2$.

Since $\mathcal{Q}_2$ is a convex set, with extreme elements $\{q_1^j\}_{j=1}^{m_1} \cup \{q_2^j\}_{j=1}^{m_2}$, there exists a probability vector $\bar{\beta}=(\beta_1^1,\ldots,\beta_1^{m_1},\beta_2^1,\ldots,\beta_2^{m_2})^\top$ such that
\begin{equation}
    \label{eq:hat_q_selection_1}
    \begin{split}
        \hat{q} =\sum_{j=1}^{m_1} \beta_1^j q_1^j + \sum_{j=1}^{m_2} \beta_2^j q_2^j.
    \end{split}
\end{equation}
That is, $\beta_1^j \geq 0$, $\beta_2^j \geq 0$, for all $j$, and $\sum_{j=1}^{m_1} \beta_1^j + \sum_{j=1}^{m_2} \beta_2^j = 1$. 
Due to the fact that every $q_1^j$ is learned by variational inference \citep{variational} from a prior $q_0^j$ in Algorithm \ref{alg:fgcs_update}, for each $q_1^j$, we have
\begin{equation}
    \label{eq:hat_q_selection_2}
    \begin{split}
        q_1^j(\theta) \approx \frac{\ell_1(\bar{x}_1, \bar{y}_1|\theta)q_0^j(\theta)}{\int_{\Theta} \ell_1(\bar{x}_1, \bar{y}_1|\theta)q(\theta) d\theta} 
        \propto \ell_1(\bar{x}_1, \bar{y}_1|\theta)q_0^j(\theta) = \hat{p}_1(\bar{x}_1, \bar{y}_1|\theta)q_0^j(\theta)
    \end{split}
\end{equation}
where $\ell_1$ is the likelihood at task 1, and $\hat{p}_1 \equiv \ell_1$ estimates the pdf of task 1's true data generating process $p_1$.


Recall that in Bayesian continual learning, we use the previous task's posterior as the next task's prior. Then, since every $q_2^j$ is learned by variational inference from a prior $q_1^j$, we have that 
\begin{equation}
    \label{eq:hat_q_selection_3}
    \begin{split}
    q_2^j(\theta) \propto \ell_2(\bar{x}_2, \bar{y}_2|\theta)\underbrace{q_1^j(\theta)}_{\propto \ell_1(\bar{x}_1, \bar{y}_1|\theta)q_0^j(\theta)} 
    \propto \underbrace{\ell_2(\bar{x}_2, \bar{y}_2|\theta)\ell_1(\bar{x}_1, \bar{y}_1|\theta)}_{\eqqcolon \hat{p}_2(\bar{x}_1, \bar{y}_1,\bar{x}_2, \bar{y}_2|\theta)}q_0^j(\theta),
    \end{split}
\end{equation}
where $\hat{p}_2 \coloneqq \ell_1 \times \ell_2$ estimates the pdf of task 2's true data generating process $p_2$. In general, $\hat{p}_i = \prod_{k=1}^i \ell_k$, and $\ell_k$ is the likelihood at task $k$ \citep{servia2021knowing}. Distribution $\hat{p}_k$ estimates the pdf of the true data generating process $p_k$ of task $k$, $k\in\{1,\ldots,i\}$. 
Therefore, we expand on  \eqref{eq:hat_q_selection_1} as
\begin{equation}
    \label{eq:hat_q_selection_4}
    \begin{split}
        \hat{q} =\sum_{j=1}^{m_1} \beta_1^j q_1^j + \sum_{j=1}^{m_2} \beta_2^j q_2^j
        \propto \hat{p}_1\sum_{j=1}^{m_1} \beta_1^j q_0^j + \hat{p}_2\sum_{j=1}^{m_2} \beta_2^j q_0^j.  
    \end{split}
\end{equation}

As a consequence of the proportionality relation in \eqref{eq:hat_q_selection_4}, we can then find a vector $\bar{w}=(w_1=\sum_{j=1}^{m_1} \beta_1^j, w_2=\sum_{j=1}^{m_2} \beta_2^j)^\top$ that expresses the designer's preferences over tasks $1$ and $2$. In turn, we can write $\hat{q} \equiv \hat{q}_{\bar{w}}$. As we can see, then, the act of selecting a generic distribution $\hat{q} \in \mathcal{Q}_2$ is equivalent to specifying a preference vector $\bar{w}$ over tasks $1$ and $2$. This concludes the proof.
\end{proof}

\begin{proof}[Proof of Proposition \ref{thm-hdr}]
For maximum generality, assume $\Theta$ is uncountable. 
Recall from Definition \ref{def:hdr} that \textit{$\alpha$-level Highest Density Region} $\Theta_{\bar{w}}^\alpha$ is defined as the subset of the parameter space $\Theta$ such that
   $$\int_{\Theta_{\bar{w}}^\alpha} \hat{q}_{\bar{w}}(\theta) \text{d}\theta \geq 1-\alpha \quad \text{ and } \quad \int_{\Theta_{\bar{w}}^\alpha} \text{d}\theta \text{ is a minimum.}$$ 
We need $\int_{\Theta_{\bar{w}}^\alpha} \text{d}\theta$ to be a minimum because we want $\Theta_{\bar{w}}^\alpha$ to be the smallest possible region that gives us the desired probabilistic coverage. Equivalently, from Definition \ref{def:hdr_formal} we can write that $\Theta_{\bar{w}}^\alpha =\{\theta \in \Theta : \hat{q}_{\bar{w}}(\theta) \geq \hat{q}_{\bar{w}}^\alpha\}$, where $\hat{q}_{\bar{w}}^\alpha$ is the largest constant such that $\Pr_{\theta \sim \hat{q}_{\bar{w}}} [ \theta \in \Theta_{\bar{w}}^\alpha ] \geq 1-\alpha$. Our result $\Pr_{\theta^\star_{\bar{w}} \sim \hat{q}_{\bar{w}}}[\theta^\star_{\bar{w}}\in \Theta_{\bar{w}}^\alpha) ] \geq 1-\alpha$, then, comes from the fact that $\Pr_{\theta^\star_{\bar{w}} \sim \hat{q}_{\bar{w}}}[\theta^\star_{\bar{w}}\in \Theta_{\bar{w}}^\alpha) ]  = \int_{\Theta_{\bar{w}}^\alpha} \hat{q}_{\bar{w}}(\theta) \text{d}\theta$, a consequence of a well-known equality in probability theory \citep{billingsley}. 
\end{proof}

\newpage
\section{Details of Experiment Configurations}
\label{app-experiments}


\edittmlrnew{
\subsection{Datasets Preparation}
\label{app-experiments-datasets}
}

\edittmlrnew{Some benchmarks are already designed for domain-incremental continual learning, while some are not, so we need to select data for each task. For fair comparison, we hold out validation datasets for all tasks. Before experiments, all hyperparameters are searched by optimizing the overall average accuracy per task on the same validation sets.}

We select 15 tasks from CelebA. All tasks are binary image classification on celebrity face images. Each task $i$ is to classify whether the face has an attribute such as wearing eyeglasses or having a mustache. The first 15 attributes (out of 40) in the attribute list \citep{liu2015faceattributes} are selected for our tasks. The training, validation and testing sets are already split upon download, with 162,770, 19,867 and 19,962 images, respectively. All images are annotated with binary labels of the 15 attributes in our tasks. We use the same training, validation and testing set for all tasks, with labels being the only difference.

We select 20 classes from CIFAR100 \citep{krizhevsky2009learning} to construct 10 Split-CIFAR100 tasks \citep{zenke2017continual}. Each task is a binary image classification between an animal class (label 0) and a non-animal class (label 1). The classes are (in order of tasks):
\begin{enumerate}
    \item Label 0: aquarium fish, beaver, dolphin, flatfish, otter, ray, seal, shark, trout, whale.
    \item Label 1: bicycle, bus, lawn mower, motorcycle, pickup truck, rocket, streetcar, tank, tractor, train.
\end{enumerate}
That is, the first task is to classify between aquarium fish images and bicycle images, and so on. We want to show that the continual learning model incrementally gains knowledge of how to identify animals from non-animals throughout the task sequence. For each class, CIFAR100 has 500 training data points and 100 testing data points. We hold out 100 training data points for validation. Therefore, at each task we have 400 $\times$ 2 = 800 training data, 100 $\times$ 2 = 200 validation data and 100 $\times$ 2 = 200 testing data.

We also select 20 classes from TinyImageNet \citep{le2015tiny}. The setup is similar to Split-CIFAR100, with label 0 being animals and 1 being non-animals.
\begin{enumerate}
    \item Label 0: goldfish, European fire salamander, bullfrog, tailed frog, American alligator, boa constrictor, goose, koala, king penguin, albatross.
    \item Label 1: cliff, espresso, potpie, pizza, meatloaf, banana, orange, water tower, via duct, tractor.
\end{enumerate}
The dataset already splits 500, 50 and 50 images for training, validation and testing per class. Therefore, each task has 1000, 100 and 100 images for training, validation and testing, respectively.

20NewsGroups \citep{lang1995newsweeder} contains news report texts on 20 topics. We select 10 topics for 5 binary text classification tasks. Each task is to distinguish whether the topic is computer-related (label 0) or not computer-related (label 1), as follows.
\begin{enumerate}
    \item Label 0: comp.graphics, comp.os.ms-windows.misc, comp.sys.ibm.pc.hardware, comp.sys.mac.hardware, comp.windows.x.
    \item Label 1: misc.forsale, rec.autos, rec.motorcycles, rec.sport.baseball, rec.sport.hockey.
\end{enumerate}
Each class has different number of news reports. On average, a class has 565 reports for training and 376 for testing. We then hold out 100 reports from the 565 for validation. Therefore, each binary classification task has 930, 200 and 752 data points for training, validation and testing, on average respectively.

\edittmlrnew{
\subsection{Hyperparameters}
\label{app-experiments-hyperparameters}
}

All data points are first preprocessed by \edittmlrnew{the same} feature extractor. For images, the feature extractor is a pre-trained ResNet18 \citep{he2016deep}. We input the images into the ResNet18 model and obtain its last hidden layer's activations, which has a dimension of 512. For texts, the extractor is TF-IDF \citep{aizawa2003information} succeeded with PCA to reduce the dimension to 512 as well.

\edittmlrnew{On top of the extracted features, all methods share the same trainable feed-forward model architecture (input=512, hidden=64, output=1).} The hidden layer is ReLU-activated and the output layer is sigmoid-activated. Therefore, our parameter space $\Theta$ is the set of all values that can be taken by this network's weights and biases. \edittmlrnew{For Bayesian methods (IBCL and VCL + rehearsal), each model is trained with evidence lower bound (ELBO) loss. For other methods, each model is trained with binary cross entropy (BCE) loss.} 

\edittmlrnew{IBCL and baseline methods share common hyperparameters: learning rate, batch size and number of epcohs. VCL + rehearsal and IBCL also both need prior distributions. For these common hyperparameters, the search is done in the same search space with the same budget (e.g. fixing the number of epochs when searching for learning rate).} \edittmlrnew{The search results} are as follows. Here, ``lr'' stands for learning rate. 

\edittmlr{
\begin{enumerate}
    \item CelebA: priors = $\mathcal{N}(0, 0.25^2I)$, lr = $1e-3$, batch size = $64$, epochs = $10$.
    \item Split-CIFAR100: priors = $\mathcal{N}(0, 2.5^2I)$, lr = $5e-4$, batch size = $32$, epochs = $50$.
    \item TinyImageNet: priors = $\mathcal{N}(0, 2.5^2I)$, lr = $5e-4$, batch size = $32$, epochs = $30$.
    \item 20NewsGroup: priors = $\mathcal{N}(0, 2.5^2I)$, lr = $5e-4$, batch size = $32$, epochs = $100$.
\end{enumerate}
}

With the numbers above, we can compute the numerical values in Table \ref{tab:training_overhead}.

\edittmlrnew{Hyperparameters unique to each method, such as memory cache size for rehearsal-based methods, and $\alpha$ and $d$ in IBCL, are also searched in their own search spaces in validation sets.}

\edittmlrnew{
\subsection{Experiment Configuration of Reinforcement Learning}
\label{app-experiments-rl}

For reinforcement learning experiments in Section \ref{subsec:exp_rl}, the baseline papers have already constructed continual learning environments, so we do not need to construct our own. The baseline methods also have the same shared hyperparameters and simulation steps in their papers, so we apply them to IBCL and directly compare to the results they reported.
}

\end{appendices}

\end{document}